\documentclass{article}

\usepackage[english]{babel}

\usepackage[letterpaper,top=2cm,bottom=2cm,left=3cm,right=3cm,marginparwidth=1.75cm]{geometry}

\usepackage{authblk}

\usepackage{graphicx}
\usepackage{cite}
\usepackage{amsmath,amssymb,amsfonts}
\usepackage{algorithmic}
\usepackage{graphicx}
\usepackage{textcomp}
\usepackage{xcolor}
\usepackage{amsthm}
\usepackage[colorlinks=true, allcolors=blue]{hyperref}

\usepackage{natbib}
\usepackage{booktabs}

\newtheorem{theorem}{\bf Theorem}

\newtheorem{lemma}{\bf Lemma}

\newtheorem{corollary}{\bf Corollary}

\title{In-Context Reward Adaptation for Robust Preference Modeling}
\author[1]{Zhenyu Sun}
\author[2]{Zheng Xu}
\author[1]{Ermin Wei}
\affil[1]{Northwestern University}
\affil[2]{Meta Superintelligence Labs}

\date{}

\begin{document}
\maketitle

\begin{abstract}
  Reinforcement Learning from Human Feedback (RLHF) typically relies on static reward models to align Large Language Models with human preferences. However, human values are inherently diverse and heterogeneous, and a single reward model often lacks the robustness required to generalize to unseen preference domains. While existing multi-reward frameworks attempt to address this, they are often restricted to a fixed set of known domains and fail to adapt to unseen human distributions without costly retraining. In this work, we propose In-Context Reward Adaptation, a transformer-based framework designed to model diverse and unseen human preferences on the fly. By leveraging the in-context learning capabilities of transformers, our approach adaptively infers the underlying reward structure from a small set of preference demonstrations. We demonstrate that while a standard transformer architecture is insufficient for this task by characterizing an asymptotic bias to the ground-truth, incorporating human response time as an auxiliary input signal enables the model to successfully adapt to preferences from previously unseen domains. Our findings show that this approach provides a more robust foundation for preference modeling, allowing for the representation of heterogeneous rewards and preference distribution shift, and offering a scalable path toward more flexible human-AI alignment.
\end{abstract}

\section{Introduction}

Reinforcement Learning from Human Feedback (RLHF) has become a central paradigm for aligning large language models (LLMs) with human intent, enabling models to be fine-tuned using reward functions learned from human preference data \cite{ouyang2022training, korbak2023pretraining,shaikh2024aligning}. In most existing RLHF pipelines, a \emph{single} reward model is trained on aggregated preference data and then used to guide policy optimization \cite{ouyang2022training,wang2023aligning}. While this approach has proven effective at scale, it implicitly assumes that human preferences are sufficiently stable and homogeneous to be represented by a single global objective.

In practice, however, human preferences are neither homogeneous nor static. On the one hand, preferences can be dynamic even for the same individual: as users gain experience, encounter new contexts, or adapt their goals over time, their judgments and decision criteria may shift. On the other hand, preferences are highly \emph{heterogeneous} across individuals \cite{lambert2023alignment,li2024enhancing,casper2023open}. Differences in cultural background, personal values, domain expertise, and situational context can lead different humans to evaluate the same model output in drastically different ways. As a result, preference data collected from a population reflects a mixture of diverse and evolving reward structures rather than a single coherent objective.

% This mismatch exposes a fundamental limitation of single reward modeling in RLHF. A reward model trained to approximate an average preference inevitably suppresses minority or context-specific preferences, leading to misalignment when deployed in settings where user intent deviates from the dominant patterns in the training data. Such failures are particularly problematic in interactive or personalized systems, where alignment with individual users is critical.

To better capture preference heterogeneity, recent work has proposed training \emph{multiple} reward models, each corresponding to a predefined domain, annotator group, or objective \cite{ovadya2023generative,rame2023rewarded,chakraborty2024maxmin, park2024rlhf}. While this strategy improves expressivity within known settings, it introduces several new challenges. First, the set of reward models must be specified in advance, making it difficult to accommodate dynamic changes in human preferences over time. Second, when new preference data are collected from previously unseen humans or domains, a new reward model typically needs to be trained or existing models must be retrained \cite{park2024rlhf,jang2023personalized,singh2025fspo}. This process is computationally costly, requires repeated data collection, and does not scale well as the number of distinct preference types grows. As a result, multi-reward modeling remains brittle under \emph{task shift} and offers limited flexibility in open-ended, real-world deployments.

In this work, we explore a different paradigm motivated by the in-context learning capabilities of transformer-based models to overcome the above-mentioned issues of multi-reward modeling. In-context learning allows a pretrained transformer to adapt to new tasks at inference time by conditioning on a small number of examples, without requiring parameter updates \cite{min2022rethinking, bai2023transformers, zhang2024trained}. This property suggests a promising alternative for preference modeling, i.e., rather than training or retraining reward models whenever preferences change, a model could \emph{infer} an intrinsic reward function on the fly from in-context preference demonstrations, which we refer to as \emph{in-context reward adaptation}.

However, a central finding of this paper is that naively applying in-context learning to pairwise preference data is insufficient. We show that when only binary comparative labels are available, in-context adaptation to unseen human reward models is fundamentally impossible, even with expressive transformer architectures and unlimited data. To resolve this problem, we introduce human \emph{response time} as an auxiliary behavioral signal. Drawing inspiration from cognitive decision-making models \cite{wagenmakers2007ez,ratcliff2008diffusion,berlinghieri2023measuring}, we show that response time encodes information about the \emph{strength} of preferences, complementing the directional information provided by binary comparisons. We demonstrate theoretically that incorporating response time restores identifiability of reward parameters and enables correct in-context adaptation to previously unseen human preferences, which shows a promising approach for robust reward modeling under preference distribution shift. Experiments on both synthetic and real-world human decision-making datasets further validate our analysis, showing substantial improvements in robustness of our proposed in-context reward adaptation under preference distribution shift.

In general, our results highlight fundamental limitations of existing reward modeling of RLHF pipelines based on static reward modeling and binary preferences. More broadly, this work suggests that scalable and robust human–AI alignment requires both adaptive learning mechanisms and richer forms of human feedback beyond binary comparisons.

\subsection*{Related Work}
\paragraph{Reward modeling with heterogeneous preferences.}
Aligning AI models to match diverse human preference data has been comprehensively studied in recent years. \citet{chakraborty2024maxmin} shows a single reward model is insufficient to align with heterogeneous preferences, which inspires a min-max framework to train a compromised policy. \citet{park2024rlhf} proposes a multi-reward learning approach based on representation learning and clustering and a preference aggregation approach to learn a single reward. \citet{sorensen2024roadmap} considers a multi-objective reward modeling to balance several distinct objectives induced by diverse preferences. In \cite{singh2025fspo} a meta-learning approach is proposed to fit personalized reward models with few-shot preference demonstrations. Synthetic persona-guided methods are introduced in \cite{ryan2025synthesizeme,zhang2024guided} in order to adapt to heterogeneous preferences. These works only consider reward adaptation and personalization to existing human preference data, or to unseen preferences but with asymptotic biases, while our method allows zero-bias adaptation to any preference data either seen or unseen.

\paragraph{In-context learning.}
Transformer-based models have been observed to exhibit in-context learning abilities in natural language processing \cite{wei2022emergent,dasgupta2022language,zhang2022opt}. \citet{garg2022can} initiates the study of in-context learning under a mathematical framework, showing transformers can in-contextly learn linear regression, two-larger ReLU networks, decision trees, etc. A line of works focuses on the construction of transformers, showing that the constructed transformers can perform as gradient descent \cite{bai2023transformers} and second-order algorithms \cite{fu2024transformers,giannou2024well}. Another line of research focuses on training dynamics and optimization landscape of transformers, where \citet{zhang2024trained} and proves a single linear-attention layer can in-contextly learn linear regression and is robust to task shift. \citet{huang2023context} shows similar results by further consider the additional softmax module. Recently, \citet{shen2024training} shows the in-context learning ability of linear attention transformers to classification problems by assuming mixture-of-Gaussian posteriors of the feature space, while our results are not restricted to such assumptions on the feature space.

\section{In-Context Learning Framework for RLHF with Heterogeneous Preferences}
\label{sec:icl-rlhf}

We study reward modeling of Reinforcement Learning from Human Feedback (RLHF) in a setting where human preferences are \emph{heterogeneous} and drawn from an unknown population. Unlike classical formulations that assume a single latent reward function shared across all annotators, we explicitly model variability across persons and investigate whether a single in-context learner can \emph{adaptively infer} a new person's reward function from a small number of preference demonstrations.

\subsection{Human Preference Model}
\label{subsec:pref-model}

Let $x \in \mathcal{X}$ denote a prompt, and let $y_0, y_1 \in \mathcal{Y}(x)$ be two candidate responses generated by a pretrained language model. We assume the existence of a population of human types\footnote{The human type means a group of humans sharing the same reward model.} indexed by $i$, each characterized by an intrinsic reward function
$
r_i : \mathcal{X} \times \mathcal{Y} \to \mathbb{R}.
$
For a given human type $i$, preferences between candidate responses follow a Bradley--Terry (BT) model \cite{bradley1952rank}:
\begin{equation}
\label{eq:BT}
p_i^*(y_w \succ y_l \mid x)
= \sigma\!\left( r_i(x,y_w) - r_i(x,y_l) \right),
\end{equation}
where $\sigma(t) = (1 + e^{-t})^{-1}$ is the sigmoid function, and $y_w, y_l$ denote the preferred and unpreferred responses, respectively.

Throughout the paper, we adopt a linear reward parameterization
\begin{equation}
\label{eq:linear-reward}
r_i(x,y) = \phi(x,y)^\top \theta_i^*,   \nonumber
\end{equation}
where $\phi(x,y) \in \mathbb{R}^d$ is a shared, known feature representation, and $\theta_i^* \in \mathbb{R}^d$ is an unknown parameter vector associated with human type $i$. The parameters $\theta_i^*$ are drawn from an unknown population distribution $\mathcal{H}$, which may have continuous support.

For a pairwise comparison $(x, y_0, y_1)$, define a binary preference variable
\[
z_i =
\begin{cases}
+1, & \text{if } y_1 \succ y_0, \\
-1, & \text{if } y_0 \succ y_1 .
\end{cases}
\]
Then the BT model can be equivalently written as
\begin{equation}
\label{eq:BT-z}
\mathbb{P}(z_i = 1 \mid x, y_0, y_1)
= \sigma\!\left( \big(\phi(x,y_1) - \phi(x,y_0)\big)^\top \theta_i^* \right).
\end{equation}
For notational convenience, we define the \emph{difference feature}
\[
\tilde{\phi}(x,y_0,y_1) := \phi(x,y_1) - \phi(x,y_0)
\]
and hence
$$
    \mathbb{P}(z_i=1 \mid x,y_0,y_1) = \sigma \left(\tilde{\phi}(x,y_0, y_1)^T \theta_i^* \right).
$$

\subsection{Limits of Multi-Reward Modeling}
\label{subsec:motivation}

A common strategy for handling heterogeneous preferences is to train \emph{multiple} reward models, each corresponding to a known human type or domain. While effective in-distribution, such approaches fundamentally assume access to a \emph{fixed} and \emph{finite} set of preference distributions during training \cite{park2024rlhf,singh2025fspo}. In practice, however, the population distribution $\mathcal{H}$ may evolve over time due to changes in cultural context, task requirements, expertise, or prolonged interaction with the system. Or there might be new human annotators exhibit preferences that are totally unseen when training the multi-reward models. These circumstances introduce parameters $\theta_{\mathrm{new}} \notin \mathcal{H}$ in practice. In such cases, static reward models suffer from \emph{out-of-distribution (OOD) degradation}, and performance can only be recovered by retraining or fine-tuning on new preference data, which is costly, slow, and incompatible with rapid human-in-the-loop alignment.

This motivates a different paradigm, i.e., instead of learning a fixed mapping from $(x,y)$ to reward, we seek a model that is \emph{robust} to OOD degradation $\theta_{new} \in \mathcal{H}$ and can adapt its effective reward function in context, conditioned on a small set of observed preferences.

\subsection{In-Context Reward Adaptation via Transformers}
\label{subsec:icl-transformer}

We leverage the in-context learning capabilities of transformers to perform on-the-fly inference of human reward parameters from preference demonstrations.

\paragraph{Preference demonstrations.}
Fix a human type $i$. Consider a set of $N$ prompt--response pairs
$
\{(x_i^l, y_{i,0}^l, y_{i,1}^l)\}_{l=1}^N,
$
for which preference data are collected. We additionally consider a \emph{query} instance $(x_i^q, y_{i,0}^q, y_{i,1}^q)$, for which the goal is to predict the preference of the same human type $i$ using only the in-context demonstrations.

\paragraph{Prompt matrix construction.}
We encode the demonstration data into a prompt matrix
\[
E_i =
\begin{bmatrix}
\phi_{i,0}^1 & \phi_{i,0}^2 & \cdots & \phi_{i,0}^N & \phi_{i,0}^q \\
\phi_{i,1}^1 & \phi_{i,1}^2 & \cdots & \phi_{i,1}^N & \phi_{i,1}^q \\
z_i^1   & z_i^2   & \cdots & z_i^N   & *
\end{bmatrix},
\]
where $\phi_{i,0}^l := \phi(x_i^l, y_{i,0}^l)$, $\phi_{i,1}^l := \phi(x_i^l, y_{i,1}^l)$, and the final entry $*$ corresponds to the unknown label for the query pair. To isolate preference-relevant structure, we apply a fixed linear transformation
\[
\tilde{E}_i =
\begin{bmatrix}
 -I_d & I_d & 0 \\
 0_d^\top & 0_d^\top & 1
\end{bmatrix}
E_i,
\]
so that each column of $\tilde{E}_i$ contains the difference feature $\tilde{\phi}_i^l = \phi_{i,1}^l - \phi_{i,0}^l$ together with its associated preference signal, i.e., after transformation
\[
\tilde{E}_i =
\begin{bmatrix}
\tilde{\phi}_{i}^1 & \tilde{\phi}_{i}^2 & \cdots & \tilde{\phi}_{i}^N & \tilde{\phi}_{i}^q \\
z_i^1   & z_i^2   & \cdots & z_i^N   & *
\end{bmatrix}.
\]

\paragraph{Transformer architecture.}
Following the linear-attention transformer formulation of \cite{huang2023context,zhang2024trained,shen2024training}, we consider a single-layer transformer of the form
\begin{equation}
\label{eq:transformer}
F(W^V, W^{KQ}; \tilde{E}_i)
= \tilde{E}_i + W^V \tilde{E}_i
\frac{\tilde{E}_i^\top W^{KQ} \tilde{E}_i}{N}.
\end{equation}
Similar to \cite{wu2023many,ahn2023transformers} we restrict the parameter matrices to
\[
W^V =
\begin{bmatrix}
0_{d \times d} & 0_d \\
0_d^\top & 1
\end{bmatrix},
\qquad
W^{KQ} =
\begin{bmatrix}
U & 0_d \\
0_d^\top & 0
\end{bmatrix},
\]
with $U \in \mathbb{R}^{d \times d}$ trainable. Note that this structure simplifies the our theoretical analysis while preserving the critical properties of the transformer.

\paragraph{Induced prediction.}
Under this architecture, the predicted preference for the query instance takes the form
\begin{equation}
\label{eq:prediction}
\mathbb{P}(\hat{z}_i^q = 1 \mid x^q, y_0^q, y_1^q)
=
\sigma\!\left(
\frac{1}{N}
\sum_{l=1}^N
z_i^l \,
(\tilde{\phi}^l)^\top
U
\tilde{\phi}^q
\right),
\end{equation}
where $\tilde{\phi}^q = \phi(x^q, y_1^q) - \phi(x^q, y_0^q)$. We set $\hat{z}_i^q = 1$ if the probability in \eqref{eq:prediction} exceeds $1/2$, and $\hat{z}_i^q = -1$ otherwise.

\subsection{Training Procedure}
\label{subsec:training}

We assume that difference features are drawn from a distribution $\tilde{\phi} \sim P_{\tilde{\phi}}$, independently of $\theta_i^* \sim \mathcal{H}$. Following \cite{zhang2024trained,shen2024training}, we evaluate performance in expectation over both human types and query instances, and minimize the expected cross-entropy loss
\begin{align}
\label{eq:LN}
L_N(U)
&=
-\frac{1}{2}
\mathbb{E}_{\theta_i^* \sim \mathcal{H}, \tilde{\phi} \sim P_{\tilde{\phi}}}
\Big[
(1 + z_i^q)\log \mathbb{P}(\hat{z}_i^q = 1) 
+
(1 - z_i^q)\log \mathbb{P}(\hat{z}_i^q = -1)
\Big],
\end{align}
where $z_i^q$ is generated according to the ground-truth BT model \eqref{eq:BT-z}, and $\mathbb{P}(\hat{z}_i^q = \pm 1)$ is given by \eqref{eq:prediction}.

\section{Impossibility of In-Context Reward Adaptation from Binary Preferences}
\label{sec:impossibility}

In this section, we study the fundamental limits of in-context reward adaptation when the transformer is provided only with binary comparative preference labels. Despite the apparent flexibility of the in-context learning framework introduced in the previous section, we show that preference data alone are insufficient for robustly adapting to unseen human reward parameters. This negative result holds even under idealized conditions with infinite data and perfect optimization, revealing a structural obstruction rather than a finite-sample or algorithmic limitation.

Our analysis proceeds in two stages. We first characterize the training dynamics of the transformer and establish convergence to a well-defined population-level objective. We then analyze inference on a previously unseen human and show that asymptotically correct prediction is generically impossible.

For human type $i$ we define the preference moment
\[
\mu_i := \mathbb{E}_{z_i, \tilde{\phi}}\!\left[ z_i \tilde{\phi} \mid \theta_i^* \right].
\]

We first analyze the optimization landscape induced by the expected training objective. See Appendix \ref{apx:proof_thm-loss} for the proof.

\begin{theorem}[Asymptotic Optimality]
\label{thm_loss-convergence}
Assume the following conditions hold:
\begin{enumerate}
    \item $\mathbb{E}_{i \sim \mathcal{H}}(\mu_i \mu_i^\top)$ and $\mathbb{E}_{\tilde{\phi} \sim P_{\tilde{\phi}}}(\tilde{\phi} \tilde{\phi}^\top)$ are full rank;
    \item the feature difference satisfies $\|\tilde{\phi}\| \le B$ almost surely;
    \item optimization is restricted to the bounded parameter set
    \[
        \mathcal{U} = \left\{ U \in \mathbb{R}^{d \times d} \mid \|U\|_F \le R \right\}.
    \]
\end{enumerate}
Then the following statements hold:
\begin{enumerate}
    \item[i.] The training objective $L_N(U)$ \eqref{eq:LN} is strongly convex with respect to $U$.
    \item[ii.] The population objective
    \begin{align}
        \bar{L}(U)
        &:=
        - \frac{1}{2}
        \mathbb{E}
        \Big[
        (1+z_i^q)\log \sigma(\mu_i^\top U \tilde{\phi}_i^q)  \nonumber \\
        &~ +
        (1-z_i^q)\log \bigl(1-\sigma(\mu_i^\top U \tilde{\phi}_i^q)\bigr)
        \Big]
    \end{align}
    is also strongly convex.
    \item[iii.] The uniform convergence bound
    \[
        \bigl| L_N(U) - \bar{L}(U) \bigr| = \mathcal{O}(N^{-1}), 
        \quad \forall U \in \mathbb{R}^{d \times d},
    \]
    holds.
    \item[iv.] Let $U_N^*$ and $\bar{U}^*$ denote the unique minimizers of $L_N$ and $\bar{L}$, respectively. $U_N^*, \bar{U}^*$ are unique and
    \[
        \| U_N^* - \bar{U}^* \|_F = \mathcal{O}(N^{-1}).
    \]
\end{enumerate}
\end{theorem}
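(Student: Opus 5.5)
The plan is to write both objectives as the expected logistic loss of a linear score in $U$, and then read off convexity, the approximation bound and the minimizer bound in turn. Set $\hat\mu_i := \frac1N\sum_{l=1}^N z_i^l\tilde\phi^l$. Then the score in \eqref{eq:prediction} is $s_N(U)=\hat\mu_i^\top U\tilde\phi^q=\langle U,\hat\mu_i(\tilde\phi^q)^\top\rangle_F$, and
\[
L_N(U)=\mathbb{E}\bigl[\ell(z_i^q s_N(U))\bigr],\qquad \ell(t)=\log(1+e^{-t}).
\]
Likewise $\bar L(U)=\mathbb{E}[\ell(z_i^q\langle U,\mu_i(\tilde\phi^q)^\top\rangle_F)]$. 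The demonstrations are i.i.d.\ given $\theta_i^*$ and independent of the query, so $\mathbb{E}[\hat\mu_i\mid\theta_i^*]=\mu_i$ and $\operatorname{Cov}(\hat\mu_i\mid\theta_i^*)=O(B^2/N)$, because $\|z\tilde\phi\|\le B$.

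\textbf{Items i--ii (strong convexity).} Since $\ell''(t)=\sigma(t)(1-\sigma(t))$, the Hessian of $\bar L$, acting on $\mathrm{vec}(U)$, is
\[
\mathbb{E}\bigl[\ell''(\cdot)\,(\tilde\phi^q\otimes\mu_i)(\tilde\phi^q\otimes\mu_i)^\top\bigr].
\]
On $\mathcal U$ the argument satisfies $|\mu_i^\top U\tilde\phi^q|\le\|\mu_i\|\,R\,B\le RB^2$, so $\ell''\ge c:=\sigma(RB^2)(1-\sigma(RB^2))>0$. The query is independent of $\theta_i^*$, so the Hessian is bounded below by
\[
c\,\mathbb{E}[\tilde\phi\tilde\phi^\top]\otimes\mathbb{E}[\mu_i\mu_i^\top],
\]
and this is positive definite by assumption 1. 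The same argument works for $L_N$ with $\hat\mu_i$ in place of $\mu_i$; note $\|\hat\mu_i\|\le B$, so the same $c$ applies. For the second-moment matrix we have
\[
\mathbb{E}[\hat\mu_i\hat\mu_i^\top]=\mathbb{E}[\mu_i\mu_i^\top]+\mathbb{E}[\operatorname{Cov}(\hat\mu_i\mid\theta_i^*)]\succeq\mathbb{E}[\mu_i\mu_i^\top],
\]
so the modulus of strong convexity is uniform in $N$. Strong convexity holds on the bounded set $\mathcal U$, which is where the claims are meant to live. Existence and uniqueness of the minimizers in item iv then follow from strong convexity on the compact convex set $\mathcal U$.

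\textbf{Item iii (the main obstacle).} A naive Lipschitz argument only gives $O(N^{-1/2})$, since $\mathbb{E}\|\hat\mu_i-\mu_i\|=O(N^{-1/2})$. To get $O(N^{-1})$ I would Taylor expand $g(m):=\ell(z^q m^\top U\tilde\phi^q)$ around $m=\mu_i$, conditionally on $(\theta_i^*,\tilde\phi^q,z^q)$. The key point is that $z_i^q$ is conditionally independent of the demonstrations given $\theta_i^*$ and $\tilde\phi^q$. The first-order term is $\nabla g(\mu_i)^\top(\hat\mu_i-\mu_i)$. Its coefficient depends only on $(\theta_i^*,\tilde\phi^q,z^q)$, while $\hat\mu_i-\mu_i$ has conditional mean zero, so this term vanishes in expectation. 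The remainder is bounded by $\tfrac14\|U\|^2B^2\,\mathbb{E}\|\hat\mu_i-\mu_i\|^2=O(R^2B^4/N)$, using $\ell''\le 1/4$. This yields the bound uniformly over $U\in\mathcal U$; the statement says for all $U$, but the constant scales with $\|U\|_F^2$.

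\textbf{Item iv.} I would convert the uniform bound into a parameter bound via the gradient rather than the function values, since function values only give $\|U_N^*-\bar U^*\|_F^2\lesssim N^{-1}$. Apply the same expansion to $\nabla L_N-\nabla\bar L$: the gradient is $\mathbb{E}[-z^q\sigma(-z^q s)\,\hat\mu_i(\tilde\phi^q)^\top]$. Expanding around $\mu_i$, the linear term again has zero mean. What survives is a covariance term, $\mathbb{E}[\,\cdot\,(\hat\mu_i-\mu_i)(\hat\mu_i-\mu_i)^\top]$, plus a second-order remainder, both of which are $O(N^{-1})$. Hence $\sup_{\mathcal U}\|\nabla L_N-\nabla\bar L\|_F=O(N^{-1})$. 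Strong monotonicity of $\nabla L_N$ with modulus $\lambda$, combined with the variational inequalities at the two minimizers, gives
\[
\lambda\|U_N^*-\bar U^*\|_F^2\le\langle\nabla\bar L(\bar U^*)-\nabla L_N(\bar U^*),\,U_N^*-\bar U^*\rangle,
\]
and therefore $\|U_N^*-\bar U^*\|_F=O(N^{-1})$.
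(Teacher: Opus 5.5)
Your proposal is correct and follows essentially the same route as the paper's proof. It uses a Kronecker-product lower bound $c\,\mathbb{E}[\tilde{\phi}\tilde{\phi}^\top]\otimes\mathbb{E}[\hat{\mu}_i\hat{\mu}_i^\top]$ on the Hessian over $\mathcal{U}$, then a second-order Taylor expansion of the loss and of its gradient around $\mu_i$ in which the linear term vanishes by conditional unbiasedness, and finally strong convexity to turn the $O(N^{-1})$ gradient gap into an $O(N^{-1})$ bound on the minimizers. Your version is slightly more careful than the paper's: the paper writes $\mathbb{E}[\hat{\mu}_i\hat{\mu}_i^\top]=\mathbb{E}[\mu_i\mu_i^\top]$ where only $\succeq$ holds, and it sets $\nabla L_N(U_N^*)=\nabla\bar{L}(\bar{U}^*)=0$ instead of using the variational inequalities appropriate to the constraint set $\mathcal{U}$.
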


Theorem~\ref{thm_loss-convergence} shows that the training problem is well behaved: the objective is strongly convex, admits a unique minimizer, and empirical risk minimization consistently recovers the population optimum. Consequently, standard optimization methods converge efficiently, and any failure of in-context adaptation cannot be attributed to optimization instability or finite-sample effects.

\subsection{Inference on An Unseen Human}

We now analyze inference for a previously unseen human. Suppose the transformer has been trained to obtain $U_N^*$. We are given $M$ preference samples collected from a new human with reward parameter $\theta_{\mathrm{new}}$, without assuming $\theta_{\mathrm{new}} \sim \mathcal{H}$.

The transformer predicts the preference on a query instance $(x^q, y_0^q, y_1^q)$ according to
\[
\mathbb{P}(\hat{z}_{\mathrm{new}}^q = 1 \mid x^q, y_0^q, y_1^q)
=
\sigma\!\left(
\frac{1}{M}
\sum_{j=1}^M
z_{\mathrm{new}}^j
(\tilde{\phi}^j)^\top
U_N^*
\tilde{\phi}^q
\right),
\]
while the true preference probability is
\[
\mathbb{P}(z_{\mathrm{new}}^q = 1 \mid x^q, y_0^q, y_1^q)
=
\sigma\!\left( (\tilde{\phi}^q)^\top \theta_{\mathrm{new}} \right).
\]

Define the empirical preference moment
\[
\hat{\mu}_{\mathrm{new}} := \frac{1}{M} \sum_{j=1}^M z_{\mathrm{new}}^j \tilde{\phi}^j.
\]
By the law of large numbers,
\[
\hat{\mu}_{\mathrm{new}} \xrightarrow{p} \mu_{\mathrm{new}}
\quad \text{as } M \to \infty,
\]
where
\[
\mu_{\mathrm{new}}
=
\mathbb{E}_{\tilde{\phi}}
\Big[
\tanh\!\left(\tfrac{1}{2}\tilde{\phi}^\top \theta_{\mathrm{new}}\right)
\tilde{\phi}
\Big].
\]
Together with the convergence $U_N^* \to \bar{U}^*$ from Theorem~\ref{thm_loss-convergence}, we obtain
\[
\forall \tilde{\phi}^q,
\qquad
\hat{\mu}_{\mathrm{new}}^\top U_N^* \tilde{\phi}^q
\xrightarrow{p}
\mu_{\mathrm{new}}^\top \bar{U}^* \tilde{\phi}^q,
\quad \text{as } N,M \to \infty.
\]

Since the sigmoid function is strictly monotone, asymptotically correct prediction for all $\tilde{\phi}^q$ requires
\begin{equation}
\label{eq_cond-pred}
\theta_{\mathrm{new}} = (\bar{U}^*)^\top \mu_{\mathrm{new}},
\qquad \forall \theta_{\mathrm{new}}.
\end{equation}
However, $\mu_{\mathrm{new}}$ is a nonlinear function of $\theta_{\mathrm{new}}$, involving both the hyperbolic tangent and an expectation over the feature distribution. Condition~\eqref{eq_cond-pred} therefore requires a single linear operator $(\bar{U}^*)^\top$ to invert this nonlinear mapping for all possible reward parameters, which is generically impossible.

\subsection{Impossibility Result and Geometric Interpretation}

We formalize this intuition in the following theorem (see proof in Appendix \ref{apx:proof_thm-fail}).

\begin{theorem}[Failure of In-Context Reward Adaptation]
\label{thm_failure-ICL}
There exist distributions $P_{\tilde{\phi}}$ and $\mathcal{H}$ such that, for some $\theta_{\mathrm{new}} \in \operatorname{supp}(\mathcal{H})$,
\[
\lim_{N,M \to \infty}
d_{\mathrm{TV}}\!\left(
\hat{P}_{\mathrm{new}},
P_{\mathrm{new}}
\mid \tilde{\phi}^q
\right)
> 0,
\quad \forall \tilde{\phi}^q,
\]
where
\[
d_{\mathrm{TV}}\!\left(
\hat{P}_{\mathrm{new}},
P_{\mathrm{new}}
\mid \tilde{\phi}^q
\right)
=
\mathrm{TV}\!\left(
\mathbb{P}(\hat{z}_{\mathrm{new}}^q \mid \tilde{\phi}^q),
\mathbb{P}(z_{\mathrm{new}}^q \mid \tilde{\phi}^q)
\right).
\]
\end{theorem}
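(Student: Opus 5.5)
We prove the theorem with an explicit low-dimensional construction in which every quantity can be computed. Take $d=1$ and let $P_{\tilde{\phi}}$ be the Rademacher distribution on $\{-1,+1\}$; then $\mathbb{E}[\tilde{\phi}^2]=1$ is full rank and $\|\tilde{\phi}\|\le B=1$. For this choice the preference moment is
\[
\mu(\theta)=\mathbb{E}\bigl[\tanh(\tfrac12\tilde{\phi}\theta)\tilde{\phi}\bigr]=\tanh(\theta/2).
\]
Let $\mathcal{H}$ be uniform on $\{\theta_a,\theta_b\}$ with $0<\theta_a<\theta_b$. Then $\mathbb{E}[\mu_i^2]>0$, so condition 1 of Theorem~\ref{thm_loss-convergence} holds. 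Choose $R$ large enough that the minimizer $\bar U^*$ is interior; if necessary, work with the constrained minimizer directly. Theorem~\ref{thm_loss-convergence} then gives a unique scalar $\bar u:=\bar U^*$ with $U_N^*\to\bar u$. Since $\hat\mu_{\mathrm{new}}\to\mu_{\mathrm{new}}$ in probability, the predicted probability converges to $\sigma(\bar u\,\mu_{\mathrm{new}}\tilde{\phi}^q)$. The true probability is $\sigma(\theta_{\mathrm{new}}\tilde{\phi}^q)$.

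Next we reduce the TV condition to a scalar identity. For binary variables,
\[
\mathrm{TV}=\bigl|\sigma(\bar u\,\tanh(\theta_{\mathrm{new}}/2)\,\tilde{\phi}^q)-\sigma(\theta_{\mathrm{new}}\tilde{\phi}^q)\bigr|.
\]
Because $\sigma$ is strictly increasing, for $\tilde{\phi}^q=\pm1$ this quantity is zero if and only if $\bar u\tanh(\theta_{\mathrm{new}}/2)=\theta_{\mathrm{new}}$. This is condition \eqref{eq_cond-pred}. The phrase ``$\forall\tilde{\phi}^q$'' refers to the support $\{\pm1\}$, and both values give the same scalar condition up to sign. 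So it suffices to show that $\bar u\tanh(\theta/2)=\theta$ fails for some $\theta\in\{\theta_a,\theta_b\}$.

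The map $g(\theta)=\theta/\tanh(\theta/2)$ is strictly increasing on $(0,\infty)$. Indeed,
\[
g'(\theta)\propto \tanh(\theta/2)-\tfrac{\theta}{2}\operatorname{sech}^2(\theta/2),
\]
which is positive because $\sinh x\cosh x> x$ for $x>0$. Hence $g(\theta_a)\neq g(\theta_b)$, and no single scalar $\bar u$ can satisfy $\bar u=g(\theta)$ for both support points. Whatever the value of $\bar u$, at least one $\theta_{\mathrm{new}}\in\operatorname{supp}(\mathcal{H})$ gives a strictly positive limiting TV at both $\tilde{\phi}^q=\pm1$.

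The main obstacle is the order of the joint limit $N,M\to\infty$ together with the randomness of $\hat\mu_{\mathrm{new}}$. I would interpret the limit in probability and use continuity of $\sigma$ with the continuous mapping theorem, so that the TV converges in probability to the deterministic positive constant above.

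A secondary point concerns $\tilde{\phi}^q=0$, if one insisted on arbitrary real query features; there the TV is trivially zero. The Rademacher support avoids this, since the statement quantifies over query features in the support. Alternatively, one can use a continuous symmetric distribution bounded away from zero. I expect this interpretive point, rather than any computation, to need the most care.
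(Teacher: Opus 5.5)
Your proof is correct, and it reaches the conclusion by a different and simpler route than the paper. Both arguments share the same setup: $d=1$, Rademacher $\tilde{\phi}$ (so $\mu(\theta)=\tanh(\theta/2)$), and the reduction to the scalar condition $\bar U^*\tanh(\theta_{\mathrm{new}}/2)=\theta_{\mathrm{new}}$.

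\textbf{The paper's route.} It takes a continuous $\mathcal{H}$ with support $\mathbb{R}$, chosen so that $\tanh(\theta_i^*/2)$ is uniform on $[-1,1]$. It writes out the first-order condition $\nabla\bar L(\bar U^*)=0$. Monotonicity of the resulting integral $I(u)$, together with $I(0)<1/3<I(6)$, localizes $\bar U^*\in(0,6)$. Any $\theta_{\mathrm{new}}>6$ then fails, because $\theta/\tanh(\theta/2)>\theta$.

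\textbf{Your route.} You never need to know where $\bar U^*$ lies. Strict monotonicity of $g(\theta)=\theta/\tanh(\theta/2)$ on $(0,\infty)$ means a single scalar decoder matches at most one positive parameter. So any $\mathcal{H}$ with two distinct positive support points already forces failure at one of them.

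\textbf{What each buys.} Your argument does not depend on the training loss, the value of the minimizer, or the first-order computation. That computation is somewhat fragile: the paper computes $\mu(\theta)=\tanh(\tfrac12\theta)$ but then uses $\mu_i=\tfrac12\tanh(\tfrac12\theta_i^*)$ in the optimality condition. Your argument also applies verbatim to the paper's continuous $\mathcal{H}$. In exchange, the paper's computation gives an explicit failure region ($\theta_{\mathrm{new}}>6$), whereas you only get existence of one failing support point.

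\textbf{Side remarks.} Your reading of the limit in probability, and of ``$\forall\tilde{\phi}^q$'' as ranging over nonzero or support queries, makes explicit points the paper glosses over. At $\tilde{\phi}^q=0$ the TV is trivially zero, so some such restriction is necessary. On the interior-minimizer caveat: for your two-point $\mathcal{H}$ the labels are never deterministic, so $\bar L$ is coercive and a finite $R$ suffices. In any case, your pigeonhole step works for whatever scalar limit $U_N^*$ has.
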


Theorem~\ref{thm_failure-ICL} shows that even with infinitely many in-context demonstrations and perfect optimization, the transformer fails to correctly predict preferences for certain unseen human reward parameters.

This impossibility result can be understood geometrically by examining how binary preference data embed reward parameters into the representation space used for in-context inference. Each human reward parameter $\theta \in \mathbb{R}^d$ induces, through the Bradley--Terry model and the feature distribution $P_{\tilde{\phi}}$, a mapping defined by
\[
\mu(\theta)
=
\mathbb{E}_{\tilde{\phi}}
\Big[
\tanh\!\left(\tfrac{1}{2}\tilde{\phi}^\top \theta\right)\tilde{\phi}
\Big].
\]
This mapping $\theta \mapsto \mu(\theta)$ summarizes all information about $\theta$ that can be recovered from infinitely many binary comparisons. Importantly, this mapping is generally nonlinear and compressive.

From a geometric perspective, as $\theta$ varies over $\mathbb{R}^d$, the vectors $\mu(\theta)$ do not fill the $\mathbb{R}^d$ space uniformly. Instead, they lie on a curved, typically low-dimensional manifold
$
\mathcal{M}
=
\left\{ \mu(\theta) : \theta \in \mathbb{R}^d \right\}
\subset \mathbb{R}^d.
$
Binary preference labels therefore embed the space of reward parameters into this nonlinear manifold, collapsing different directions of variation in $\theta$ into possibly indistinguishable representations.

The in-context transformer observes a new human only through an empirical estimate of $\mu(\theta)$ and applies a fixed linear decoding map $(\bar{U}^*)^\top$ learned during training. Perfect in-context adaptation would require this linear map to act as a global inverse of the nonlinear embedding $\theta \mapsto \mu(\theta)$, i.e.,
\[
(\bar{U}^*)^\top \mu(\theta) = \theta
\quad \text{for all } \theta.
\]
Geometrically, this would require a single linear operator to decode a distorted, nonlinear embedding over the entire reward parameter space. Such an inversion is generically impossible unless $\mu(\theta)$ is linear in $\theta$ or the set of admissible reward parameters is severely restricted.

% This obstruction can be illustrated even in simple settings. For example, when $\tilde{\phi} \sim \mathcal{N}(0, I_d)$, the preference moment takes the form
% \[
% \mu(\theta) = c(\|\theta\|)\,\theta
% \]
% for a nonlinear scalar function $c(\cdot)$. In this case, all reward parameters lying on the same ray are mapped to colinear points in representation space, differing only by a nonlinear scaling factor. No fixed linear map can recover $\theta$ from $\mu(\theta)$ across all magnitudes, demonstrating that the failure of in-context adaptation persists even under highly symmetric feature distributions.

Theorem~\ref{thm_failure-ICL} reveals a fundamental limitation of in-context learning for RLHF, i.e., although transformers can aggregate and generalize patterns across demonstrations, they are constrained by the statistical sufficiency of the signals they receive. Binary comparative feedback collapses continuous reward parameters into nonlinear summary statistics that cannot be universally decoded by a fixed linear transformation. As a result, standard in-context learning architectures cannot robustly adapt to heterogeneous or previously unseen human preferences using preference labels alone.

\paragraph{Motivation for introducing auxiliary signals.}
The geometric perspective also suggests a plausible solution to resolve the issue. The failure arises because binary comparisons encode only the \emph{sign} of $\tilde{\phi}^\top \theta$, discarding magnitude information. Auxiliary signals that correlate with the strength of preference can enrich the embedding $\theta \mapsto \mu(\theta)$ and restore identifiability. In the next section, we show that incorporating response time effectively linearizes the embedding by recovering information about $|\tilde{\phi}^\top \theta|$, thereby enabling robust in-context reward adaptation for unseen human preferences.

\section{In-Context Reward Adaptation with Response Time}
\label{sec:response-time}

As established in Theorem~\ref{thm_failure-ICL}, in-context learning based solely on binary preference labels cannot universally adapt to unseen human reward parameters. In this section, we show that incorporating human \emph{response time} as an auxiliary signal resolves this obstruction and enables correct in-context reward adaptation.

We model human decision-making using a standard drift--diffusion process following \cite{li2024enhancing,berlinghieri2023measuring,wagenmakers2007ez}. For a human of type $i$ comparing responses $y_0$ and $y_1$ given prompt $x$, the response time is defined as
\[
t_i(x,y_0,y_1)
:=
\inf\bigl\{
\tau > 0 \;\big|\;
S_i(\tau) \in \{-\tfrac{1}{2}, \tfrac{1}{2}\}
\bigr\},
\]
where the latent decision variable evolves according to
\[
S_i(\tau)
=
\bigl(r_i(x,y_1) - r_i(x,y_0)\bigr)\tau + B(\tau),
\]
where $B(\tau)$ is a standard Brownian motion.
Here $\pm \tfrac{1}{2}$ denote absorbing decision boundaries\footnote{In \cite{berlinghieri2023measuring,li2024enhancing} absorbing decision boundary level is set by $a$ instead of $1/2$ considered here. However, we note that we can always incorporate $2a$ into $\theta_i^*$, forming a new $\tilde{\theta_i^*}$ which is of interest, to make the setting consistent with ours.}. 
% Under this model, the response time encodes not only the direction of preference but also its strength, i.e., $\vert \tilde{\phi}^T \theta_i^* \vert$ as shown in the following.

Following \cite{palmer2005effect}, a direct calculation yields the conditional expectation
\[
\mathbb{E}[t_i \mid \tilde{\phi}]
=
\begin{cases}
\dfrac{1}{2\,\tilde{\phi}^\top \theta_i^*}
\tanh\!\left(\dfrac{1}{2}\tilde{\phi}^\top \theta_i^*\right),
& \tilde{\phi}^\top \theta_i^* \neq 0, \\[8pt]
\dfrac{1}{4},
& \tilde{\phi}^\top \theta_i^* = 0.
\end{cases}
\]
Recalling that
$
\mathbb{E}[z_i \mid \tilde{\phi}]
=
\tanh\!\left(\tfrac{1}{2}\tilde{\phi}^\top \theta_i^*\right),
$
we obtain the key identity
\begin{equation}
\label{eq_reg-rela}
\tilde{\phi}^\top \theta_i^*
=
\frac{1}{2}
\frac{\mathbb{E}[z_i \mid \tilde{\phi}]}{\mathbb{E}[t_i \mid \tilde{\phi}]}.
\end{equation}

Equation~\eqref{eq_reg-rela} reveals the central insight of this section: combining preference labels with response time recovers a linear signal in the reward parameter. In contrast to binary labels alone, which encode only the sign of $\tilde{\phi}^\top \theta_i^*$, the ratio $z_i / t_i$ provides a continuous, magnitude-sensitive measurement (i.e., $\tilde{\phi}^T \theta_i^*$) that eliminates the geometric ambiguity discussed in Section~\ref{sec:impossibility}.

\paragraph{Prompt construction with response time.}
Assume that for each human type $i$ there are $K$ independent annotators sharing the same reward parameter $\theta_i^*$. We construct the augmented prompt
\[
E_{i,k}
=
\begin{bmatrix}
\phi_{i,0}^1 & \phi_{i,0}^2 & \cdots & \phi_{i,0}^N & \phi_{i,0}^q \\
\phi_{i,1}^1 & \phi_{i,1}^2 & \cdots & \phi_{i,1}^N & \phi_{i,1}^q \\
t_{i,k}^1 & t_{i,k}^2 & \cdots & t_{i,k}^N & * \\
z_{i,k}^1 & z_{i,k}^2 & \cdots & z_{i,k}^N & *
\end{bmatrix},
\]
where $(i,k)$ indexes the $k$-th annotator of type $i$. Aggregating over annotators yields
\[
z_i^l := \frac{1}{K}\sum_{k=1}^K z_{i,k}^l,
\qquad
t_i^l := \frac{1}{K}\sum_{k=1}^K t_{i,k}^l,
\]
and the averaged prompt
\[
E_i
=
\begin{bmatrix}
\phi_{i,0}^1 & \phi_{i,0}^2 & \cdots & \phi_{i,0}^N & \phi_{i,0}^q \\
\phi_{i,1}^1 & \phi_{i,1}^2 & \cdots & \phi_{i,1}^N & \phi_{i,1}^q \\
t_i^1 & t_i^2 & \cdots & t_i^N & * \\
z_i^1 & z_i^2 & \cdots & z_i^N & *
\end{bmatrix}.
\]

We construct the prompt matrix using the feature difference and ratio $z_i/ t_i$ as follows
\[
\tilde{E}_i
=
\begin{bmatrix}
\tilde{\phi}_i^1 & \tilde{\phi}_i^2 & \cdots & \tilde{\phi}_i^N & \tilde{\phi}_i^q \\
\dfrac{z_i^1}{t_i^1} & \dfrac{z_i^2}{t_i^2} & \cdots & \dfrac{z_i^N}{t_i^N} & *
\end{bmatrix},
\]
where $\tilde{\phi}_i^l = \phi_{i,1}^l - \phi_{i,0}^l.$

\paragraph{Training objective and its asymptotic behavior.}
Feeding $\tilde{E}_i$ into the same linear-attention transformer architecture yields the prediction
\[
\hat{o}_i^q
=
\frac{1}{N}
\sum_{l=1}^N
\frac{z_i^l}{t_i^l}
(\tilde{\phi}_i^l)^\top
U
\tilde{\phi}_i^q.
\]
We train the model using the squared regression loss in expectation

\begin{align}   \label{eq_poploss-reg}
    L_{N,K}(U) = \frac{1}{2}\mathbb{E}_{\theta_i^*\sim \mathcal{H},\tilde{\phi}_i^{1:N}, \tilde{\phi}_i^q \sim P_{\tilde{\phi}}, z_i^{1:N}, t_i^{1:N}}\left(\hat{o}_i^q - (z_i^q / t_i^q)  \right)^2
\end{align}

Defining $\hat{s}_i = \frac{1}{N}\sum_{l=1}^N \frac{z_i^l}{t_i^l} \tilde{\phi}_i^l$, then $\hat{o}_i^q = \hat{s}_i^T U \tilde{\phi}^q_i$. Conditioned on $\theta_i^*$, according to the Law of Large Numbers (and noting i.i.d. $\tilde{\phi}_i^l$ and independence from $\theta_i^*$),
\begin{align*}
    \hat{s}_i \overset{p}{\to} \mathbb{E}_{\tilde{\phi}}[2 \tilde{\phi}^T \theta_i^* \tilde{\phi}] = 2 \Sigma_{\tilde{\phi}} \theta_i^* , ~~\text{as}~ N, K \to \infty
\end{align*}
where $\Sigma_{\tilde{\phi}} = \mathbb{E}[\tilde{\phi}\tilde{\phi}^T]$. Also, for the query, 
$$
    \frac{z_i^q}{t_i^q} \overset{p}{\to} 2 (\tilde{\phi}_i^q)^T \theta_i^* ~~\text{as} ~ K \to \infty.
$$
Thus, for large $N$ and $K$, $L_{N,K}$ converges to $L_{\infty}$, where
\begin{align}   \label{eq_poploss-inf}
     L_{\infty}(U) &:=\lim_{N,K \to \infty} L_{N,K}(U) \nonumber \\
    &= \frac{1}{2}\mathbb{E}\left[ ((2\Sigma_{\tilde{\phi}} \theta_i^*)^T U \tilde{\phi}^q - 2(\tilde{\phi}^q)^T \theta_i^*)^2 \right] \nonumber \\
    &= 2\mathbb{E}\left[((\theta_i^*)^T (\Sigma_{\tilde{\phi}}U - I) \tilde{\phi}^q)^2 \right]
\end{align}
One can easily observe that a minimizer of $L_{\infty}$ is given by $U^* = \Sigma_{\tilde{\phi}}^{-1}$ (which is in fact also unique as shown in Theorem) if assuming $\Sigma_{\tilde{\phi}}$ is full-rank, which should match the minimizer of $L_{N,K}$ as $N, K \to \infty$. This is formally presented in the following theorem (see Appendix \ref{apx:proof_thm-res} for proof).

\begin{theorem} \label{thm_res-time-conv}
    Consider $L_{N,K}(U)$ and $L_{\infty}(U)$ defined by \eqref{eq_poploss-reg} and \eqref{eq_poploss-inf}, respectively. Assuming that $\Sigma_{\tilde{\phi}}:=\mathbb{E}[\tilde{\phi} \tilde{\phi}^T]$ and $\mathbb{E}[\theta_i^* (\theta_i^*)^T]$ are full-rank; $\Vert \tilde{\phi} \Vert \le B$, $\Vert U \Vert_F \le R$, then the following hold:
    \item[(i).] $L_{N,K}$ and $L_{\infty}$ are strongly-convex;

    \item[(ii).] $U^* = \Sigma_{\tilde{\phi}}^{-1}$ is the unique minimizer of $L_{\infty}$;

    \item[(iii).] $\Vert U_{N,K}^* - U^* \Vert_F \le \mathcal{O}(1/\sqrt{N} + 1/\sqrt{K})$, where $U_{N,K}^*$ is the unique minimizer of $L_{N,K}$.
\end{theorem}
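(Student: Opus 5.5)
The plan is to write both losses as quadratics in $U$, read off their Hessians, and then combine strong convexity with a perturbation bound on the linear term.

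\textbf{Step 1: quadratic form of the losses.} Set $u=\operatorname{vec}(U)$. Since $\hat{o}_i^q=\hat{s}_i^\top U\tilde{\phi}_i^q=\langle \hat{s}_i(\tilde{\phi}_i^q)^\top, U\rangle_F$, we have
\[
L_{N,K}(U)=\tfrac12 u^\top A_{N,K}u-b_{N,K}^\top u+c_{N,K},
\]
with
\[
A_{N,K}=\mathbb{E}\big[(\tilde{\phi}^q(\tilde{\phi}^q)^\top)\otimes(\hat{s}_i\hat{s}_i^\top)\big].
\]
The query $\tilde{\phi}^q$ is independent of the demonstrations and of $\theta_i^*$, so this factorizes as
\[
A_{N,K}=\Sigma_{\tilde{\phi}}\otimes \mathbb{E}[\hat{s}_i\hat{s}_i^\top].
\]
Similarly, $L_\infty$ has Hessian
\[
A_\infty=\Sigma_{\tilde{\phi}}\otimes 4\Sigma_{\tilde{\phi}}\mathbb{E}[\theta_i^*\theta_i^{*\top}]\Sigma_{\tilde{\phi}}.
\]

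\textbf{Step 2: parts (i) and (ii).} Both factors of $A_\infty$ are positive definite by the full-rank assumptions, so $A_\infty\succ0$ and $L_\infty$ is strongly convex. For $L_{N,K}$, write
\[
\mathbb{E}[\hat{s}_i\hat{s}_i^\top]=\mathbb{E}[\hat{s}_i]\mathbb{E}[\hat{s}_i]^\top+\operatorname{Cov}(\hat{s}_i)\succeq \mathbb{E}_{\theta}\big[\mathbb{E}[\hat{s}_i\mid\theta]\mathbb{E}[\hat{s}_i\mid\theta]^\top\big].
\]
The conditional mean is $\mathbb{E}[(z/t)\tilde{\phi}\mid\theta]$, which tends to $2\Sigma_{\tilde{\phi}}\theta$. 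So for $K$ large the lower bound is close to $4\Sigma_{\tilde{\phi}}\mathbb{E}[\theta\theta^\top]\Sigma_{\tilde{\phi}}\succ0$, and Weyl's inequality then gives a uniform strong-convexity constant $\lambda>0$.

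For (ii), $L_\infty(U)=2\,\mathbb{E}[(\theta^\top(\Sigma_{\tilde{\phi}}U-I)\tilde{\phi}^q)^2]\ge0$, and it vanishes at $U=\Sigma_{\tilde{\phi}}^{-1}$. So $U^*=\Sigma_{\tilde{\phi}}^{-1}$ is a minimizer, and strong convexity makes it unique. We need $R\ge\|\Sigma_{\tilde{\phi}}^{-1}\|_F$ so that $U^*$ is feasible; I will state this implicitly.

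\textbf{Step 3: rate in (iii).} For $\lambda$-strongly convex quadratics, the minimizers satisfy
\[
\|u_{N,K}^*-u^*\|\le \lambda^{-1}\|\nabla L_{N,K}(U^*)-\nabla L_\infty(U^*)\|,
\]
and the gradient gap is at most $\|A_{N,K}-A_\infty\|\,\|u^*\|+\|b_{N,K}-b_\infty\|$. If minimizers lie on the boundary of the ball, the projected variational-inequality version gives the same bound. So it remains to bound the moment errors.

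These errors come from two sources:
\begin{itemize}
\item \emph{Finite $K$ bias:} $\mathbb{E}[z_i^l/t_i^l\mid\tilde{\phi},\theta]-2\tilde{\phi}^\top\theta$.
\item \emph{Finite $N$ fluctuation:} $\operatorname{Cov}(\hat{s}_i\mid\theta)=\frac1N\operatorname{Cov}\big((z/t)\tilde{\phi}\mid\theta\big)$.
\end{itemize}

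\textbf{Main obstacle: the ratio $\bar z/\bar t$.} The ratio of averages is biased and is not obviously integrable, since $t$ can be arbitrarily small. I would first use the delta method. Given $\tilde{\phi}$ and $\theta$, with $\|\tilde{\phi}^\top\theta\|$ bounded via $B$ and a bounded support of $\mathcal H$ (which I will assume), $z$ is bounded and $t$ has moments of all orders, with mean bounded below by a constant $m>0$. Writing
\[
\bar z/\bar t-\mathbb{E}z/\mathbb{E}t=g(\bar z-\mathbb{E}z,\ \bar t-\mathbb{E}t)
\]
and Taylor-expanding on the event $\{\bar t\ge m/2\}$ gives an $O(1/\sqrt K)$ error in $L^2$ (indeed $O(1/K)$ bias). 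On the complement, a Chernoff/Hoeffding-type bound on $\bar t$ for sub-exponential hitting times gives probability $e^{-cK}$. However, $\bar z/\bar t$ there can still blow up. I would handle this either by noting that $\bar t\ge \min_k t_k$, where $t_k$ has a density vanishing fast near $0$ so that negative moments exist, or, failing that, by assuming the ratio is clipped.

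Combining the two sources gives $\|A_{N,K}-A_\infty\|+\|b_{N,K}-b_\infty\|=O(1/\sqrt N+1/\sqrt K)$, which yields (iii).
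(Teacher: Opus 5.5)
Your proposal is correct and has the same skeleton as the paper's proof:
\begin{itemize}
\item the Kronecker-factored Hessian $\Sigma_{\tilde\phi}\otimes 4\Sigma_{\tilde\phi}\mathbb{E}[\theta_i^*(\theta_i^*)^\top]\Sigma_{\tilde\phi}$ for (i);
\item $L_\infty\ge 0$, vanishing at $\Sigma_{\tilde\phi}^{-1}$, for (ii);
\item strong convexity plus a gradient-gap bound for (iii), with a delta-method estimate for $z/t$ as the probabilistic input.
\end{itemize}

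\textbf{How the gradient gap is controlled.} The paper splits $\nabla L_{N,K}-\nabla L_\infty$ into $e_1,e_2,e_3$. It then applies Cauchy--Schwarz against $\mathbb{E}\|\hat s_i-s_i\|^2=O(1/N+1/K)$, which is why it only gets square-root rates. You instead compare the coefficients of the two quadratics exactly, using that the query is conditionally independent of the demonstrations given $\theta_i^*$. Set $s_K(\theta):=\mathbb{E}[\hat s_i\mid\theta_i^*=\theta]$, which depends on $K$ but not on $N$. Then
\[
A_{N,K}-A_\infty=\Sigma_{\tilde\phi}\otimes\bigl(\mathbb{E}_\theta[\operatorname{Cov}(\hat s_i\mid\theta)]+\mathbb{E}_\theta[s_Ks_K^\top-s_is_i^\top]\bigr),
\qquad
b_{N,K}-b_\infty=\operatorname{vec}\,\mathbb{E}_\theta[s_Ks_K^\top-s_is_i^\top].
\]
So only the conditional covariance (order $1/N$) and the ratio bias (order $1/K$) enter. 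Carried through, your route gives $O(1/N+1/K)$, which is sharper than the stated bound; the paper's Cauchy--Schwarz steps discard exactly these cancellations.

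\textbf{Rigor of the ratio step.} Your treatment of the ratio is what is needed to make the paper's Lemma rigorous. You truncate on $\{\bar t\ge m/2\}$, use an exponential lower-tail bound for $\bar t$, and use negative moments of first-passage times on the complement. The paper's Lemma instead expands $(1+\delta_Y/\mu_Y)^{-1}$ as a series and drops ``higher-order terms'', which is unjustified because $\bar t$ can be near zero. Your remarks on $R\ge\|\Sigma_{\tilde\phi}^{-1}\|_F$, on boundary minimizers via the variational inequality, and on bounded support of $\mathcal H$ likewise make explicit assumptions the paper uses silently.

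\textbf{One caveat.} Your Weyl argument gives strong convexity of $L_{N,K}$ only for $K\ge K_0$, so as written it does not establish (i) for every $(N,K)$. Two fixes are available:
\begin{itemize}
\item For fixed $(N,K)$, use $\mathbb{E}[\hat s_i\hat s_i^\top]\succeq\frac1N\mathbb{E}\bigl[(z_i^l/t_i^l)^2\tilde\phi_i^l(\tilde\phi_i^l)^\top\bigr]$. This is positive definite because $z_i^l\neq 0$ with conditional probability bounded below (the drift is bounded) and $\Sigma_{\tilde\phi}\succ 0$.
\item For (iii) you do not need a uniform constant for $L_{N,K}$ at all. Invoke strong convexity of $L_\infty$ at $U_{N,K}^*$, as the paper does; your coefficient bounds already control $\sup_{\|U\|_F\le R}\|\nabla L_{N,K}(U)-\nabla L_\infty(U)\|_F$.
\end{itemize}
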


\paragraph{Correct in-context adaptation for unseen humans.}
Now let us consider a new human type with arbitrary $\theta_{new}$ which is not necessarily from $\mathcal{H}$. Given $M$ in-context samples drawn from that human type with $K$ human users, as $U_{N,K}^* \to U^* = \Sigma_{\tilde{\phi}}^{-1}$ with $N,K \to \infty$, 
\begin{align}
    \hat{o}_{new}^q \overset{p}{\to} (2\Sigma_{\tilde{\phi}} \theta_{new})^T \Sigma_{\tilde{\phi}}^{-1} \tilde{\phi}^q = 2 \theta_{new}^T \tilde{\phi}^q \nonumber
\end{align}
which matches the true target as $z_{new}^q / t_{new}^q \overset{p}{\to} 2 \theta_{new}^T \tilde{\phi}^q, K \to \infty$. Therefore, the transformer is able to in-contextly adapt to human $\theta_{new}$'s preference model without retraining. The above discussion is formally characterized by the following statement (see Appendix \ref{apx:proof_coro} for proof).

\begin{corollary}[Correct In-Context Adaptation]   \label{coro_correct-pred}
    Given arbitrary $\theta_{new}$, let $\hat{o}_{new}^q$ be the prediction of the transformer after complete training, i.e., with $U_{N,K}^*$ being the minimizer of $L_{N,K}$,
    $$
        \hat{o}_{new}^q := \frac{1}{M} \sum_{l=1}^{M} \frac{z_{new}^l}{t_{new}^l} (\tilde{\phi}^l)^T U_{N,K}^* \tilde{\phi}^q
    $$
    where $z_{new}^l = (1/K)\sum_{k=1}^K z_{new, k}^l$, $ t_{new}^l = (1/K)\sum_{k=1}^K t_{new, k}^l$ are averages over $K$ humans drawn from the human type $\theta_{new}$. Then, 
    $$
        \mathbb{E}(\hat{o}_{new}^q - 2 \theta_{new}^T \tilde{\phi}^q)^2 \le \mathcal{O}(M^{-1} + N^{-1} + K^{-1} + (NK)^{-1/2}).
    $$
\end{corollary}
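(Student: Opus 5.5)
The plan is to decompose the prediction error into three pieces: the error in the in-context moment estimate, the error in the trained weight matrix, and the error from using the $K$-sample ratio. Write $\hat{s}_{new} := \frac{1}{M}\sum_{l=1}^M \frac{z_{new}^l}{t_{new}^l}\tilde{\phi}^l$ and $s_{new} := 2\Sigma_{\tilde{\phi}}\theta_{new}$, so that $\hat{o}_{new}^q = \hat{s}_{new}^\top U_{N,K}^*\tilde{\phi}^q$. Since $U^* = \Sigma_{\tilde{\phi}}^{-1}$ by Theorem~\ref{thm_res-time-conv}(ii), we have $s_{new}^\top U^*\tilde{\phi}^q = 2\theta_{new}^\top\tilde{\phi}^q$. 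This gives
\begin{align*}
\hat{o}_{new}^q - 2\theta_{new}^\top\tilde{\phi}^q
= (\hat{s}_{new}-s_{new})^\top U_{N,K}^*\tilde{\phi}^q + s_{new}^\top (U_{N,K}^*-U^*)\tilde{\phi}^q .
\end{align*}
Using $(a+b)^2\le 2a^2+2b^2$, it suffices to bound the expected square of each term separately.

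For the second term, $\|s_{new}\|\le 2B^2\|\theta_{new}\|$ and $\|\tilde{\phi}^q\|\le B$. Theorem~\ref{thm_res-time-conv}(iii) gives $\|U_{N,K}^*-U^*\|_F\le C(N^{-1/2}+K^{-1/2})$, so the square of this term is at most a constant times $N^{-1}+K^{-1}+(NK)^{-1/2}$. This is exactly where the cross term $(NK)^{-1/2}$ in the statement comes from.

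For the first term, $\|U_{N,K}^*\|_F\le R$ and $\|\tilde{\phi}^q\|\le B$, so it suffices to bound $\mathbb{E}\|\hat{s}_{new}-s_{new}\|^2$. Here $\hat{s}_{new}$ is an average of $M$ i.i.d.\ terms $w^l := \frac{z^l}{t^l}\tilde{\phi}^l$. The usual bias--variance decomposition gives
\[
\mathbb{E}\|\hat{s}_{new}-s_{new}\|^2 = \frac{1}{M}\mathrm{tr}\,\mathrm{Cov}(w) + \|\mathbb{E}w - s_{new}\|^2 .
\]
The variance part is $O(M^{-1})$ once $\mathbb{E}\|w\|^2<\infty$. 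For the bias part, condition on $\tilde{\phi}$. The ratio $\bar z/\bar t$ of $K$-sample means is a smooth function of sample means with targets $\mathbb{E}[z\mid\tilde{\phi}]$ and $\mathbb{E}[t\mid\tilde{\phi}]$. By \eqref{eq_reg-rela}, the ratio of these targets equals $2\tilde{\phi}^\top\theta_{new}$. A second-order delta-method expansion around $(\mathbb{E}z,\mathbb{E}t)$ then gives a bias of order $K^{-1}$, hence a squared bias of order $K^{-2}\le K^{-1}$. Multiplying by $\tilde{\phi}$ and integrating over $\tilde{\phi}$ preserves these orders because $\|\tilde{\phi}\|\le B$.

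The main obstacle is making this ratio analysis rigorous, since $1/\bar t$ is unbounded near $0$. I would first use that first-passage times of the drift--diffusion process to $\pm\frac12$ have finite moments of all orders and sub-exponential tails. Then I would split on the event $\{\bar t \ge \frac12\mathbb{E}[t\mid\tilde{\phi}]\}$. On this event the Taylor expansion is uniformly controlled. Its complement has probability $e^{-cK}$ by Hoeffding's or Bernstein's inequality.

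The remaining issue is bounding $\mathbb{E}[(z/t)^2]$ on the bad event. This needs a moment bound on $1/t$, using that the first-passage time satisfies $t\ge$ the time for Brownian motion with bounded drift to travel $\frac12$, which has all negative moments finite. Note that $|\tilde{\phi}^\top\theta_{new}|\le B\|\theta_{new}\|$, and $\mathbb{E}[t\mid\tilde{\phi}]$ is bounded below uniformly in terms of $B\|\theta_{new}\|$. Hence all constants are uniform over $\tilde{\phi}$, though they depend on $\|\theta_{new}\|$. If negative moments of $\bar t$ are hard to control directly, I would instead assume, as the paper implicitly does for Theorem~\ref{thm_res-time-conv}, that $z/t$ is bounded almost surely, or truncate $t$ at a small level. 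Collecting the three bounds gives $O(M^{-1}+N^{-1}+K^{-1}+(NK)^{-1/2})$.
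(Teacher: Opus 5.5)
Your proposal is correct and follows essentially the same route as the paper. The paper also writes the error as $\hat s_{new}^\top U_{N,K}^*\tilde\phi^q - s_{new}^\top U^*\tilde\phi^q$, splits it (into three terms rather than your two), and combines Theorem~\ref{thm_res-time-conv}(iii) with a bound $\mathbb{E}\|\hat s_{new}-s_{new}\|^2=\mathcal{O}(M^{-1}+K^{-1})$ obtained from the ratio-of-means Lemma~\ref{lmm_frac-conv}. Your bias--variance split (squared bias $\mathcal{O}(K^{-2})$) and good/bad-event control of $1/\bar t$ are a sharper, more rigorous version of that lemma, whose proof simply discards ``higher-order terms''. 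To make your constants uniform in $K$, note that convexity of $x\mapsto 1/x$ gives $\bar t^{-1}\le \frac{1}{K}\sum_k t_k^{-1}$, hence $\mathbb{E}[\bar t^{-p}]\le \mathbb{E}[t^{-p}]$, so the fallback boundedness assumption is unnecessary.
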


In summary, these results show that response time resolves the limitation of in-context learning with binary preference data in Section~\ref{sec:impossibility}. By enriching binary comparisons with a continuous signal proportional to preference strength, the transformer can recover a linear embedding of reward parameters and achieve correct in-context adaptation to unseen human preferences.

\section{Experiments}
We evaluate the proposed in-context reward adaptation framework on both a controlled synthetic dataset and a real-world human preference dataset with recorded response times. The synthetic experiments are designed to closely mirror the theoretical setting and allow precise validation under known ground-truth reward models, while the real-world experiments demonstrate the applicability of response-time–augmented in-context learning on behavioral data collected from human subjects.

\subsection{Synthetic Dataset}
For the synthetic setting, we construct a preference learning environment that follows the setting of our problem. For each query, we independently sample two feature vectors $\phi_0, \phi_1 \in \mathbb{R}^d$ from some fixed distributions, and define the feature difference $\tilde{\phi} := \phi_1 - \phi_0$. 

Human reward parameters $\theta_i^*$ are sampled from a mixture of two Gaussian distributions, inducing heterogeneity across human types. To in-context learning ability to unseen preferences, we additionally sample a new human type $\theta_{\mathrm{new}}$ from a third Gaussian distribution that is disjoint from the training mixture, thereby creating an explicit out-of-distribution (OOD) test setting. To obtain response times, we simulate a drift--diffusion process as described in Section~\ref{sec:response-time}. Specifically, for each human type, we simulate $K$ independent annotators by running the drift--diffusion process $K$ times per query.

Although our theoretical results focus on a linear-attention transformer for analytical tractability, we also implement the in-context learning framework using a GPT-2 model with $124$M parameters. Both linear attention model and GPT-2 model are trained on the same synthesized preference data, using either binary preference labels alone or the response-time–augmented targets introduced in Section~\ref{sec:response-time}. 

\begin{table}[h]
\centering
\caption{Test accuracy under in-distribution (ID) and out-of-distribution (OOD) preference data for linear attention and GPT2}

\quad

\begin{tabular}{p{8em}p{6em}p{6em}}
    \toprule
     Setting & LinearAttn & GPT2 \\
    \midrule
    w/o resp (ID)  & 0.936 &  0.925 \\
    w/o resp (OOD) & 0.783 &  0.694 \\
    \midrule
    w/ resp (ID)   & 0.878 &  0.905 \\
    w/ resp (OOD)  & 0.891 &  0.875 \\
    \bottomrule
\end{tabular}
\label{table:exp_syn}
\end{table}

Numerical results are presented in Table \ref{table:exp_syn}, where "resp" represents "response time"; "ID" or "OOD" means that the new human parameter is drawn from the same distribution as used for training or from a different distribution. When trained without response time, both linear attention and GPT-2 perform well on in-distribution preference data but exhibit a pronounced degradation under OOD preferences. This behavior is consistent with our theoretical analysis, which predicts that binary preference labels alone are insufficient for in-context adaptation to unseen reward parameters.

Incorporating response time substantially improves OOD performance for both architectures, bringing test accuracy close to in-distribution levels. This supports the central claim of the paper: response time resolves the information bottleneck inherent in binary comparisons and enables robust in-context reward adaptation to unseen human preferences. Notably, the same qualitative trend appears for both linear attention and GPT-2, indicating that the failure of binary-only in-context learning and the benefits of response time are not artifacts of limited model capacity, but stem from fundamental properties of preference information.

\subsection{Real-World Dataset: Food-Risk Preferences}

To evaluate our approach on real human data, we adopt the food-risk dataset introduced by \cite{smith2018attention} and used in prior work.

The dataset consists of binary choices and response times collected from $42$ participants, each responding to between $60$ and $200$ queries. Each query presents two arms, where each arm contains two food items. By selecting an arm, participants receive one of the two food items uniformly at random. In addition to choices and response times, participants’ eye movements were recorded during the experiment.

\begin{table}[h]
\caption{Test accuracy under in-distribution (ID) and out-of-distribution (OOD) food-risk preference data for GPT2 across different inference lengths}

\quad

\centering
\begin{tabular}{p{8em}p{4em}p{4em}p{4em}}
    \toprule
     Setting       & $M=4$ & $M=8$    &  $M=16$ \\
    \midrule
    w/o resp (ID)  & 0.606  &  0.679  &  0.675  \\
    w/o resp (OOD) & 0.581  &  0.625  &  0.631  \\
    \midrule
    w/ resp (ID)   & 0.633  &  0.681  &  0.710  \\
    w/ resp (OOD)  & 0.605  &  0.667  &  0.705  \\
    \bottomrule
\end{tabular}
\label{table:exp_real}
\end{table}

\begin{figure}[h]
    \centering
    \includegraphics[width=0.6\linewidth]{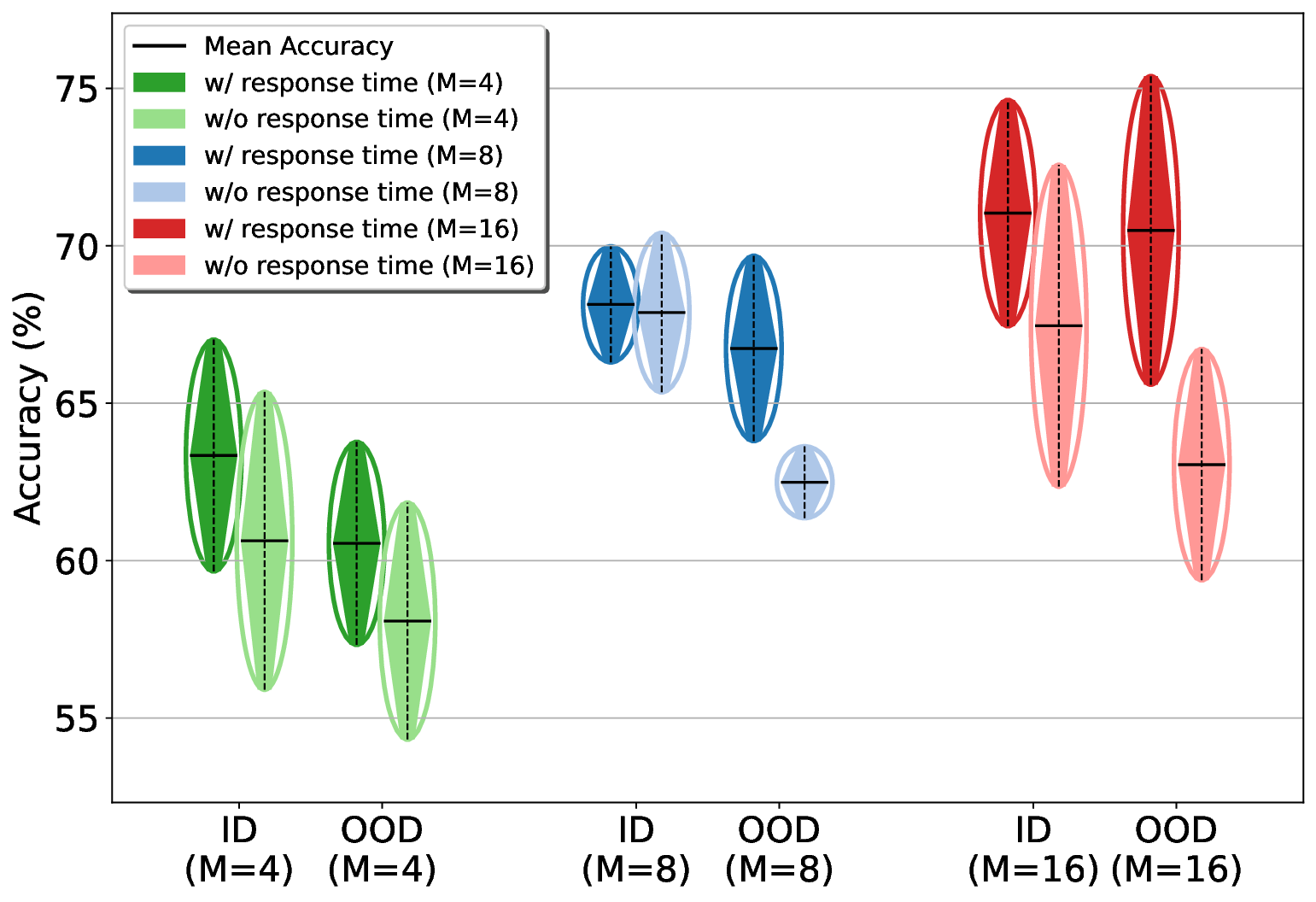}
    \caption{Inference accuracy (mean $\pm$ std) across different $M$}
    \label{fig:exp_real}
\end{figure}

Prior to the choice task, participants rated all food items on a discrete scale from $-10$ to $10$. Following prior work \cite{li2024enhancing}, we define each arm’s feature vector as the participant’s ratings of the two food items in that arm, augmented with second-order polynomial features. This yields a feature representation in $\mathbb{R}^5$.

Table~\ref{table:exp_real} and Figure \ref{fig:exp_real} show performance on the food-risk dataset when training with GPT-2, where a small subset of participants is grouped and treated as a new human type, and the remaining participants are used for training. We vary the inference length $M$, corresponding to the number of in-context preference demonstrations available at test time. Similar to the results of synthetic data, incorporating response time generally narrows the performance gap between in-distribution and out-of-distribution inferences, indicating effective in-context adaptation to unseen human preferences. This phenomenon becomes more significant and clear as inference length $M$ increases, which is consistent with our theory, as longer demonstration sequences improve in-context learning ability of the trained transformer.

\section{Limitations}
We briefly discuss the limitations of this work. For theoretical tractability, our analysis focuses on a linear-attention transformer, which is a simplified abstraction of the architectures commonly used in practice. Although our experiments with GPT-2 demonstrate that the observed phenomena extend to standard transformer models, extending the theoretical guarantees to more complex architectures remains an important problem for future work. In addition, while response time serves as an effective auxiliary signal for resolving the failure of in-context adaptation under binary preference labels, it may be difficult to reliably measure or collect in practical settings. Exploring alternative auxiliary signals that are easier to obtain yet similarly informative for preference modeling is therefore a promising direction.

\section{Conclusion}
In this paper, We study in-context reward adaptation for preference modeling in RLHF and showed that in-context learning based only on binary preference labels is fundamentally insufficient for adapting to unseen human preferences. This limitation stems from an intrinsic information bottleneck, where binary labels collapse reward parameters into nonlinear representations that cannot be universally decoded by linear attentions. We address this issue by incorporating human response time as an auxiliary signal. Our theoretical analysis shows that response time recovers a linear signal of preference strength and enables correct in-context adaptation to unseen reward models. Experiments on both synthetic and real-world datasets support our theory, demonstrating improved robustness to distribution shift and more effective use of in-context demonstrations.

\nocite{langley00}

\bibliography{main}

@article{chakraborty2024maxmin,
  title={MaxMin-RLHF: Alignment with diverse human preferences},
  author={Chakraborty, Souradip and Qiu, Jiahao and Yuan, Hui and Koppel, Alec and Huang, Furong and Manocha, Dinesh and Bedi, Amrit Singh and Wang, Mengdi},
  journal={arXiv preprint arXiv:2402.08925},
  year={2024}
}

@article{park2024rlhf,
  title={Rlhf from heterogeneous feedback via personalization and preference aggregation},
  author={Park, Chanwoo and Liu, Mingyang and Kong, Dingwen and Zhang, Kaiqing and Ozdaglar, Asuman},
  journal={arXiv preprint arXiv:2405.00254},
  year={2024}
}

@article{li2024enhancing,
  title={Enhancing Preference-based Linear Bandits via Human Response Time},
  author={Li, Shen and Zhang, Yuyang and Ren, Zhaolin and Liang, Claire and Li, Na and Shah, Julie A},
  journal={Advances in Neural Information Processing Systems},
  volume={37},
  pages={16852--16893},
  year={2024}
}

@article{zhang2024trained,
  title={Trained transformers learn linear models in-context},
  author={Zhang, Ruiqi and Frei, Spencer and Bartlett, Peter L},
  journal={Journal of Machine Learning Research},
  volume={25},
  number={49},
  pages={1--55},
  year={2024}
}

@article{bai2023transformers,
  title={Transformers as statisticians: Provable in-context learning with in-context algorithm selection},
  author={Bai, Yu and Chen, Fan and Wang, Huan and Xiong, Caiming and Mei, Song},
  journal={Advances in neural information processing systems},
  volume={36},
  pages={57125--57211},
  year={2023}
}

@article{min2022rethinking,
  title={Rethinking the role of demonstrations: What makes in-context learning work?},
  author={Min, Sewon and Lyu, Xinxi and Holtzman, Ari and Artetxe, Mikel and Lewis, Mike and Hajishirzi, Hannaneh and Zettlemoyer, Luke},
  journal={arXiv preprint arXiv:2202.12837},
  year={2022}
}

@article{ouyang2022training,
  title={Training language models to follow instructions with human feedback},
  author={Ouyang, Long and Wu, Jeffrey and Jiang, Xu and Almeida, Diogo and Wainwright, Carroll and Mishkin, Pamela and Zhang, Chong and Agarwal, Sandhini and Slama, Katarina and Ray, Alex and others},
  journal={Advances in neural information processing systems},
  volume={35},
  pages={27730--27744},
  year={2022}
}

@inproceedings{korbak2023pretraining,
  title={Pretraining language models with human preferences},
  author={Korbak, Tomasz and Shi, Kejian and Chen, Angelica and Bhalerao, Rasika Vinayak and Buckley, Christopher and Phang, Jason and Bowman, Samuel R and Perez, Ethan},
  booktitle={International Conference on Machine Learning},
  pages={17506--17533},
  year={2023},
  organization={PMLR}
}

@article{berlinghieri2023measuring,
  title={Measuring utility with diffusion models},
  author={Berlinghieri, Renato and Krajbich, Ian and Maccheroni, Fabio and Marinacci, Massimo and Pirazzini, Marco},
  journal={Science Advances},
  volume={9},
  number={34},
  pages={eadf1665},
  year={2023},
  publisher={American Association for the Advancement of Science}
}

@article{wagenmakers2007ez,
  title={An EZ-diffusion model for response time and accuracy},
  author={Wagenmakers, Eric-Jan and Van Der Maas, Han LJ and Grasman, Raoul PPP},
  journal={Psychonomic bulletin \& review},
  volume={14},
  number={1},
  pages={3--22},
  year={2007},
  publisher={Springer}
}

@article{palmer2005effect,
  title={The effect of stimulus strength on the speed and accuracy of a perceptual decision},
  author={Palmer, John and Huk, Alexander C and Shadlen, Michael N},
  journal={Journal of vision},
  volume={5},
  number={5},
  pages={1--1},
  year={2005},
  publisher={The Association for Research in Vision and Ophthalmology}
}

@article{smith2018attention,
  title={Attention and choice across domains.},
  author={Smith, Stephanie M and Krajbich, Ian},
  journal={Journal of Experimental Psychology: General},
  volume={147},
  number={12},
  pages={1810},
  year={2018},
  publisher={American Psychological Association}
}

@article{wu2023many,
  title={How Many Pretraining Tasks Are Needed for In-Context Learning of Linear Regression?},
  author={Wu, Jingfeng and Zou, Difan and Chen, Zixiang and Braverman, Vladimir and Gu, Quanquan and Bartlett, Peter L},
  journal={arXiv preprint arXiv:2310.08391},
  year={2023}
}

@article{ahn2023transformers,
  title={Transformers learn to implement preconditioned gradient descent for in-context learning},
  author={Ahn, Kwangjun and Cheng, Xiang and Daneshmand, Hadi and Sra, Suvrit},
  journal={Advances in Neural Information Processing Systems},
  volume={36},
  pages={45614--45650},
  year={2023}
}

@article{sorensen2024roadmap,
  title={A roadmap to pluralistic alignment},
  author={Sorensen, Taylor and Moore, Jared and Fisher, Jillian and Gordon, Mitchell and Mireshghallah, Niloofar and Rytting, Christopher Michael and Ye, Andre and Jiang, Liwei and Lu, Ximing and Dziri, Nouha and others},
  journal={arXiv preprint arXiv:2402.05070},
  year={2024}
}

@article{singh2025fspo,
  title={Fspo: Few-shot preference optimization of synthetic preference data in llms elicits effective personalization to real users},
  author={Singh, Anikait and Hsu, Sheryl and Hsu, Kyle and Mitchell, Eric and Ermon, Stefano and Hashimoto, Tatsunori and Sharma, Archit and Finn, Chelsea},
  journal={arXiv preprint arXiv:2502.19312},
  year={2025}
}

@inproceedings{ryan2025synthesizeme,
  title={SynthesizeMe! inducing persona-guided prompts for personalized reward models in LLMs},
  author={Ryan, Michael J and Shaikh, Omar and Bhagirath, Aditri and Frees, Daniel and Held, William Barr and Yang, Diyi},
  booktitle={Proceedings of the 63rd Annual Meeting of the Association for Computational Linguistics (Volume 1: Long Papers)},
  pages={8045--8078},
  year={2025}
}

@article{zhang2024guided,
  title={Guided profile generation improves personalization with llms},
  author={Zhang, Jiarui},
  journal={arXiv preprint arXiv:2409.13093},
  year={2024}
}

@article{wei2022emergent,
  title={Emergent abilities of large language models},
  author={Wei, Jason and Tay, Yi and Bommasani, Rishi and Raffel, Colin and Zoph, Barret and Borgeaud, Sebastian and Yogatama, Dani and Bosma, Maarten and Zhou, Denny and Metzler, Donald and others},
  journal={arXiv preprint arXiv:2206.07682},
  year={2022}
}

@article{dasgupta2022language,
  title={Language models show human-like content effects on reasoning tasks},
  author={Dasgupta, Ishita and Lampinen, Andrew K and Chan, Stephanie CY and Sheahan, Hannah R and Creswell, Antonia and Kumaran, Dharshan and McClelland, James L and Hill, Felix},
  journal={arXiv preprint arXiv:2207.07051},
  year={2022}
}

@article{zhang2022opt,
  title={Opt: Open pre-trained transformer language models},
  author={Zhang, Susan and Roller, Stephen and Goyal, Naman and Artetxe, Mikel and Chen, Moya and Chen, Shuohui and Dewan, Christopher and Diab, Mona and Li, Xian and Lin, Xi Victoria and others},
  journal={arXiv preprint arXiv:2205.01068},
  year={2022}
}

@article{garg2022can,
  title={What can transformers learn in-context? a case study of simple function classes},
  author={Garg, Shivam and Tsipras, Dimitris and Liang, Percy S and Valiant, Gregory},
  journal={Advances in neural information processing systems},
  volume={35},
  pages={30583--30598},
  year={2022}
}

@article{giannou2024well,
  title={How Well Can Transformers Emulate In-context Newton's Method?},
  author={Giannou, Angeliki and Yang, Liu and Wang, Tianhao and Papailiopoulos, Dimitris and Lee, Jason D},
  journal={arXiv preprint arXiv:2403.03183},
  year={2024}
}

@article{fu2024transformers,
  title={Transformers learn to achieve second-order convergence rates for in-context linear regression},
  author={Fu, Deqing and Chen, Tian-Qi and Jia, Robin and Sharan, Vatsal},
  journal={Advances in Neural Information Processing Systems},
  volume={37},
  pages={98675--98716},
  year={2024}
}

@article{huang2023context,
  title={In-context convergence of transformers},
  author={Huang, Yu and Cheng, Yuan and Liang, Yingbin},
  journal={arXiv preprint arXiv:2310.05249},
  year={2023}
}

@article{shen2024training,
  title={On the training convergence of transformers for in-context classification of gaussian mixtures},
  author={Shen, Wei and Zhou, Ruida and Yang, Jing and Shen, Cong},
  journal={arXiv preprint arXiv:2410.11778},
  year={2024}
}

@article{wang2023aligning,
  title={Aligning large language models with human: A survey},
  author={Wang, Yufei and Zhong, Wanjun and Li, Liangyou and Mi, Fei and Zeng, Xingshan and Huang, Wenyong and Shang, Lifeng and Jiang, Xin and Liu, Qun},
  journal={arXiv preprint arXiv:2307.12966},
  year={2023}
}

@article{lambert2023alignment,
  title={The alignment ceiling: Objective mismatch in reinforcement learning from human feedback},
  author={Lambert, Nathan and Calandra, Roberto},
  journal={arXiv preprint arXiv:2311.00168},
  year={2023}
}

@article{shaikh2024aligning,
  title={Aligning Language Models with Demonstrated Feedback},
  author={Shaikh, Omar and Lam, Michelle S and Hejna, Joey and Shao, Yijia and Cho, Hyundong and Bernstein, Michael S and Yang, Diyi},
  journal={arXiv preprint arXiv:2406.00888},
  year={2024}
}

@article{casper2023open,
  title={Open problems and fundamental limitations of reinforcement learning from human feedback},
  author={Casper, Stephen and Davies, Xander and Shi, Claudia and Gilbert, Thomas Krendl and Scheurer, J{\'e}r{\'e}my and Rando, Javier and Freedman, Rachel and Korbak, Tomasz and Lindner, David and Freire, Pedro and others},
  journal={arXiv preprint arXiv:2307.15217},
  year={2023}
}

@article{ratcliff2008diffusion,
  title={The diffusion decision model: theory and data for two-choice decision tasks},
  author={Ratcliff, Roger and McKoon, Gail},
  journal={Neural computation},
  volume={20},
  number={4},
  pages={873--922},
  year={2008},
  publisher={MIT Press}
}

@article{ovadya2023generative,
  title={'Generative CI'through Collective Response Systems},
  author={Ovadya, Aviv},
  journal={arXiv preprint arXiv:2302.00672},
  year={2023}
}

@article{rame2023rewarded,
  title={Rewarded soups: towards pareto-optimal alignment by interpolating weights fine-tuned on diverse rewards},
  author={Rame, Alexandre and Couairon, Guillaume and Dancette, Corentin and Gaya, Jean-Baptiste and Shukor, Mustafa and Soulier, Laure and Cord, Matthieu},
  journal={Advances in Neural Information Processing Systems},
  volume={36},
  pages={71095--71134},
  year={2023}
}

@article{jang2023personalized,
  title={Personalized soups: Personalized large language model alignment via post-hoc parameter merging},
  author={Jang, Joel and Kim, Seungone and Lin, Bill Yuchen and Wang, Yizhong and Hessel, Jack and Zettlemoyer, Luke and Hajishirzi, Hannaneh and Choi, Yejin and Ammanabrolu, Prithviraj},
  journal={arXiv preprint arXiv:2310.11564},
  year={2023}
}

@article{bradley1952rank,
  title={Rank analysis of incomplete block designs: I. the method of paired comparisons},
  author={Bradley, Ralph Allan and Terry, Milton E},
  journal={Biometrika},
  volume={39},
  number={3/4},
  pages={324--345},
  year={1952},
  publisher={JSTOR}
}
\bibliographystyle{abbrvnat}

%%%%%%%%%%%%%%%%%%%%%%%%%%%%%%%%%%%%%%%%%%%%%%%%%%%%%%%%%%%%%%%%%%%%%%%%%%%%%%%
%%%%%%%%%%%%%%%%%%%%%%%%%%%%%%%%%%%%%%%%%%%%%%%%%%%%%%%%%%%%%%%%%%%%%%%%%%%%%%%
% APPENDIX
%%%%%%%%%%%%%%%%%%%%%%%%%%%%%%%%%%%%%%%%%%%%%%%%%%%%%%%%%%%%%%%%%%%%%%%%%%%%%%%
%%%%%%%%%%%%%%%%%%%%%%%%%%%%%%%%%%%%%%%%%%%%%%%%%%%%%%%%%%%%%%%%%%%%%%%%%%%%%%%
\newpage
\appendix
\onecolumn

\section{Proof of Theorem \ref{thm_loss-convergence}}   \label{apx:proof_thm-loss}

\begin{proof}
    To show Parts (i) and (ii),
    define $\hat{\mu}_i = \frac{1}{N}\sum_{l=1}^N z_i^l \tilde{\phi}_i^l$ and $\mu_i = \mathbb{E}[z_i \tilde{\phi} \mid \theta_i^*] = \mathbb{E}_{\tilde{\phi} \sim P_{\tilde{\phi}}}[\mathrm{tanh}(\frac{1}{2}\tilde{\phi}^T \theta_i^*)\tilde{\phi}]$. Let $\hat{s}_i(U) = \hat{\mu}_i^T U \tilde{\phi}^q_i$ and $s_i(U) = \mu_i^T U \tilde{\phi}^q_i$. Denote $w = \text{vec}(U) \in \mathbb{R}^{d^2}$ by vectorizing $U$. Then a simple calculation gives 
    $$
        \hat{s}_i(U) = \mathrm{tr}(U \tilde{\phi}^q_i \hat{\mu}_i^T) = \mathrm{tr}(U^T \hat{\mu}_i (\tilde{\phi}^q_i)^T) = w^T (\tilde{\phi}^q_i \otimes \hat{\mu}_i) .
    $$
    Denoting $l_N(w) = L_N(U)$ and letting $v = \tilde{\phi}^q_i \otimes \hat{\mu}_i$ and $y = (1 + z_i^q) / 2$, calculating its gradient and Hessian as follows
    $$
        \nabla l_N(w) = \mathbb{E}[(\sigma(w^T v) - y)v]
    $$
    $$
        \nabla^2 l_N(w) = \mathbb{E}[\sigma(w^T v) (1 - \sigma(w^T v))vv^T].
    $$
    Since $\Vert U \Vert_F \le R$, $\Vert \tilde{\phi} \Vert \le B, \text{with}~ \tilde{\phi} \sim P_{\tilde{\phi}}$ almost surely by assumptions, we obtain
    $$
        \vert w^T x \vert \le \Vert \hat{\mu}_i \Vert \Vert U \Vert_F \Vert \tilde{\phi}_q \Vert \le RB^2 < \infty
    $$
    and hence almost surely there exists some strictly positive constant $\lambda > 0$ such that
    $$
        \nabla^2 l(w) \ge \lambda \mathbb{E}[vv^T] \succ 0
    $$
    by noticing $\mathbb{E}[vv^T] = \mathbb{E}[\tilde{\phi}^q_i(\tilde{\phi}^q_i)^T] \otimes \mathbb{E}[\hat{\mu}_i \hat{\mu}_i^T] = \mathbb{E}[\tilde{\phi}^q_i(\tilde{\phi}^q_i)^T] \otimes \mathbb{E}[\mu_i \mu_i^T] \succ 0$ where the first equality follows the independence of $\tilde{\phi}^q_i$ and $\hat{\mu}_i$. This concludes the strong convexity of $L_N(U)$. Similarly, by replacing $\hat{\mu}_i$ by $\mu_i$, following the same procedure concludes the strong convexity of $\bar{L}(U)$.

    To show Part (iii), conditioned on $\theta_i^*$, $\tilde{\phi}^q_i$ and $z_i^q$, defining $g_U(u) = (1 + z_i^q) \log \sigma(u^T U \tilde{\phi}^q_i) + (1 - z_i^q) \log (1 - \sigma(u^T U \tilde{\phi}^q_i))$, then it is obvious that $L_N(U) = \mathbb{E}[g_U(\hat{\mu}_i)]$ and $\bar{L}(U) = \mathbb{E}[g_U(\mu_i)]$. For any fixed $U$, denoting $\delta = \hat{\mu}_i - \mu_i$, we have
    $$
        g_U(\hat{\mu}_i) - g(\mu_i) = \nabla g_U(\mu_i)\delta + \frac{1}{2} \delta^T \nabla^2 g_U(\mu_i + t \delta) \delta
    $$
    for some $t \in (0, 1)$. Taking expectations on both sides yields
    $$
        L_N(U) - \bar{L}(U) = \mathbb{E}\left[\frac{1}{2} \delta^T \nabla^2 g(\mu_i + t \delta) \delta \right]
    $$
    where we use the fact $\mathbb{E}[\delta \mid i] = 0$. Note that
    $$
        \nabla^2_u g_U(u) = \sigma(u^T U \tilde{\phi}^q_i) (1 - \sigma(u^T U \tilde{\phi}^q_i)) U \tilde{\phi}^q_i (U \tilde{\phi}^q_i)^T
    $$
    Under $\sigma(x)(1 - \sigma(x)) \le 1/4, \forall x$ and boundedness $\Vert \tilde{\phi} \Vert \le B, \Vert U \Vert_F \le R$, 
    $$
        \Vert \nabla^2 g(\cdot) \Vert \le \frac{1}{4} \Vert U \tilde{\phi}^q_i \Vert^2 \le \frac{1}{4}R^2 B^2,
    $$
    which indicates
    \begin{align*}
        \vert L_N(U) - \bar{L}(U) \vert &\le \frac{1}{8}R^2 B^2 \mathbb{E}\Vert \delta \Vert^2  \\
        &= \frac{1}{8N}R^2 B^2 (2B)^2 \\
        &= \frac{R^2 B^4}{2N}.
    \end{align*}

    To show Part (iv), define $h_U(u) = (\sigma(u^T U \tilde{\phi}_i^q) - (1 + z_i^q) / 2) u $. Then, $\nabla L_N(U) = \mathbb{E}[h_U(\hat{\mu}_i) (\tilde{\phi}^q_i)^T]$, $\nabla \bar{L}(U) = \mathbb{E}[h_U(\mu_i)(\tilde{\phi}^q_i)^T]$. After careful calculations and noting $|\sigma'(\dot)| \le 1/4, \sigma''(\cdot) \le 1/6$, we have
    $$
        \Vert \nabla^2_u h_U(u) \Vert_{op} \le c_1 \Vert U \tilde{\phi}_i^q\Vert + c_2\Vert U \tilde{\phi}_i^q\Vert^2 \Vert u \Vert
    $$
    for some positive constants $c_1, c_2 > 0$\footnote{Note that $\nabla_u^2 h_U(\cdot)$ is a tensor lying in $\mathbb{R}^{d \times d \times d}$.}. Since $\mathbb{E}[\nabla h_U(\mu_i) \delta \mid i, \tilde{\phi}_i^q, z_i^q] = \nabla h_U(\mu_i) \mathbb{E}[\delta] = 0$, we have
    \begin{align*}
        \Vert \nabla L_N(U) - \nabla \bar{L}(U) \Vert_F &= \Vert \mathbb{E}[\mathbb{E}[h_U(\hat{\mu}_i) - h_U(\mu_i) \mid i, \tilde{\phi}_i^q, z_i^q] \tilde{\phi}^q_i] \Vert \\
        &\le \frac{1}{2} \mathbb{E}[\Vert \nabla^2 h_U(\mu_i + t\delta) \Vert_{op} \Vert \delta \Vert^2 \Vert \tilde{\phi}^q_i\Vert].
    \end{align*}
    By the almost sure boundedness of $\tilde{\phi}$, $\Vert \mu_i + t \delta \Vert \le \Vert \hat{\mu}_i \Vert + 2 \Vert \mu_i \Vert \le 3B$ almost surely, hence indicating $\Vert \nabla^2 h_U(\mu_i + t\delta) \Vert_{op} \le c_1 RB + 3c_2 R^2 B^3$ almost surely. Therefore,
    \begin{align*}
        \Vert \nabla L_N(U) - \nabla \bar{L}(U) \Vert_F &\le \frac{1}{2}(c_1 RB^2 + 3c_2 R^2 B^4) \mathbb{E}\Vert \delta \Vert^2 \\
        &\le \frac{2 (c_1 + 3c_2 RB^2)B^4}{N}.
    \end{align*}
    According to strong convexity of $\bar{L}(\cdot)$, there is some positive constant $\alpha$ such that
    \begin{align*}
        \Vert U_N^* - \bar{U}^*\Vert_F &\le \frac{1}{\alpha}\Vert \nabla \bar{L}(U_N^*) - \nabla \bar{L}(\bar{U}^*)\Vert_F \\
        &= \Vert \nabla \bar{L}(U_N^*) - \nabla L_N(U_N^*)\Vert_F \\
        &\le \mathcal{O}(1/N)
    \end{align*}
    where the second equality follows $\nabla \bar{L}(\bar{U}^*) = \nabla L_N(U_N^*) = 0$. This concludes the proof.

\end{proof}

\section{Proof of Theorem \ref{thm_failure-ICL}}    \label{apx:proof_thm-fail}
\begin{proof}
    We consider the simplest case where $d=1$ and let $P_{\tilde{\phi}}$ be the following: $P(\tilde{\phi} = +1) = P(\tilde{\phi} = -1) = 1/2$. Then,
    $$
        \mu_{new}(\theta) = \mathbb{E}_{\tilde{\phi}}[\mathrm{tanh}(\frac{1}{2}\tilde{\phi}^T \theta) \tilde{\phi}] = \frac{1}{2} \cdot \mathrm{tanh}(\frac{1}{2} \theta) + \frac{1}{2} \cdot \mathrm{tanh}(-\frac{1}{2} \theta) \cdot (-1) = 
        \mathrm{tanh}(\frac{1}{2} \theta) .
    $$
    Considering the asymptotic case, i.e., $M = \infty, N= \infty$, we have 
    \begin{align*}
        \mathbb{P}_{new}(\hat{z}_{new}^q = 1 \mid \tilde{\phi}) &= \lim_{M \to \infty, N \to \infty} \sigma \left( \frac{1}{M} \sum_{j=1}^M z_{new}^j (\tilde{\phi}^j)^T U_N^* \tilde{\phi}^q  \right) \\
        &= \lim_{M \to \infty}\sigma \left( \frac{1}{M} \sum_{j=1}^M z_{new}^j (\tilde{\phi}^j)^T \bar{U}^* \tilde{\phi}^q  \right) \\
        &= \sigma \left( \mu_{new}(\theta_{new}) ^T \bar{U}^* \tilde{\phi}^q \right)  \\
        &= \sigma \left(\mathrm{tanh}(\frac{1}{2} \theta_{new}) \bar{U}^* \tilde{\phi}^q \right) .
    \end{align*}
    Suppose a correct prediction is achieved asymptotically, i.e., we need to find some $\bar{U}^* \in \mathbb{R}$ such that
    $$
        \mathrm{tanh}(\frac{1}{2} \theta_{new}) \bar{U}^* = \theta_{new}
    $$ 
    which implies that $$\bar{U}^* = \frac{\theta_{new}}{\mathrm{tanh}(0.5 \theta_{new})}.$$
    Moreover, note that $\bar{U}^*$ satisfies $\nabla \bar{L}(\bar{U}^*) = 0$, which is equivalent to
    \begin{align*}
        0 &= \mathbb{E}_{i \sim \mathcal{H}}[(\sigma(\mu_i \bar{U}^*) - \sigma(\theta_i^*))\mu_i] \\
        &= \frac{1}{2}\mathbb{E}_i[(\tanh(0.5 \mu_i \bar{U}^*) - \mathrm{tanh}(0.5 \theta_i^*))\mu_i] \\
        &= \frac{1}{4}\mathbb{E}_i[(\tanh(0.25 \tanh(0.5 \theta_i^*) \bar{U}^*) - \mathrm{tanh}(0.5 \theta_i^*))\tanh(0.5 \theta_i^*)]
    \end{align*}
    where we use $\mu_i = \frac{1}{2}\tanh(\frac{1}{2}\theta_i^*)$ and $\sigma(x) = \frac{1}{2}\mathrm{tanh}(\frac{1}{2} x) + \frac{1}{2}$. Let $m = \tanh(0.5 \theta_i^*) \in (-1, 1)$. Due to monotonicity and invertibility of $\tanh(\cdot)$, there exists some distribution $\theta_i^* \sim \mathcal{H}$ whose support is $\mathbb{R}$ such that $m \sim \text{Unif}[-1, 1]$ where $\text{Unif}[\cdot, \cdot]$ denotes the uniform distribution. Thus, it is equivalent that $\bar{U}^*$ satisfies 
    \begin{align*}
        0 &= \mathbb{E}_{m}[m(\tanh(0.25m \bar{U}^*) - m)] \\
        &= \mathbb{E}_{m}[m(\tanh(0.25m \bar{U}^*)] - \frac{1}{3} \\
        &= \frac{1}{2}\int_{-1}^1 m(\tanh(0.25m \bar{U}^*) dm - \frac{1}{3}
    \end{align*}
    Define
    $$
        I(u) = \frac{1}{2}\int_{-1}^1 m(\tanh(0.25m \bar{U}^*) dm.
    $$
    It is obvious that $I'(u) > 0, \forall u $ which indicates $I(u)$ is monotonically increasing. Noticing that $I(0) < 1/3$ $I(6) > 1/3$, it implies $\bar{U}^* \in (0,6)$. Further, as $\bar{U}^* = \frac{\theta_{new}}{\mathrm{tanh}(0.5 \theta_{new})}$ in order for asymptotically correct prediction, considering $\forall \theta_{new} > 6 $, we have $\bar{U}^* > 6$, which contradicts to the previous claim $\bar{U}^* \in (0,6)$. Therefore, we conclude the proof as we find distributions $P_{\tilde{\phi}}, \mathcal{H}$ and $\theta_{new}$ such that $\mathbb{P}(\hat{z}^q_{new} = \cdot \mid \tilde{\phi}^q) \ne \mathbb{P}(z^q_{new} = \cdot \mid \tilde{\phi}^q)$.
\end{proof}

\section{Proof of Theorem \ref{thm_res-time-conv}}  \label{apx:proof_thm-res}
\begin{proof}
    Define $\hat{s}_i = \frac{1}{N}\sum_{l=1}^N (z_i^l / t_i^l) \tilde{\phi}_i^l$, $s_i = 2\Sigma_{\tilde{\phi}}\theta_i^*$ and $\mu_{z_i^l} = \mathbb{E}[z_i^l \mid \tilde{\phi}_i^l]$, $\mu_{t_i^l} = \mathbb{E}[t_i^l \mid \tilde{\phi}_i^l]$. Then,
    \begin{align*}
        \hat{s}_i - s_i &= \frac{1}{N}\sum_{l=1}^N \left(\frac{z_i^l}{t_i^l} - \frac{\mu_{z_i^l}}{\mu_{t_i^l}} \right) \tilde{\phi}_i^l + \frac{1}{N}\sum_{l=1}^{N} \left(\frac{\mu_{z_i^l}}{\mu_{t_i^l}} \tilde{\phi}_i^l - 2\Sigma_{\tilde{\phi}}\theta_i^* \right) \\
        &=\frac{1}{N}\sum_{l=1}^N \left(\frac{z_i^l}{t_i^l} - 2(\tilde{\phi}_i^l)^T \theta_i^* \right) \tilde{\phi}_i^l + \frac{1}{N}\sum_{l=1}^{N} \left(2(\tilde{\phi}_i^l)^T \theta_i^* \tilde{\phi}_i^l - 2\Sigma_{\tilde{\phi}}\theta_i^* \right).
    \end{align*}
    Taking expectation of its norm square yields
    \begin{align*}
        \mathbb{E}\Vert \hat{s}_i - s_i \Vert^2 &\le \frac{2}{N}\sum_{l=1}^N \mathbb{E}\left[\left(\frac{z_i^l}{t_i^l} - 2(\tilde{\phi}_i^l)^T \theta_i^* \right)^2 \Vert \tilde{\phi}_i^l \Vert^2 \right] + \frac{2}{N}\mathbb{E}\left[\mathrm{Var}(\Vert 2(\tilde{\phi}_i^l)^T \theta_i^* \tilde{\phi}_i^l \Vert \mid \theta_i^*) \right] \\
        &\le \mathcal{O}(1/K) + \mathcal{O}(1/N).
    \end{align*}
    where we use Lemma \ref{lmm_frac-conv} in the last inequality.
    It is straightforward that
    \begin{align}
        \text{vec}(\nabla L_{N,K}(U)) &= \mathbb{E}\left[ (\hat{s}_i^T U \tilde{\phi}_i^q - z_i^q / t_i^q) \tilde{\phi}_i^q \otimes \hat{s}_i \right] \nonumber \\
        \text{vec}(\nabla L_{\infty}(U)) &= \mathbb{E}\left[ (s_i^T U \tilde{\phi}_i^q - 2(\tilde{\phi}_i^q)^T \theta_i^*) \tilde{\phi}_i^q \otimes s_i \right] \nonumber
    \end{align}
    and hence
    \begin{align*}
        \nabla L_{N,K}(U) - \nabla L_{\infty}(U) &= \underbrace{\mathbb{E}\left[ ((\hat{s}_i - s_i)^T U \tilde{\phi}_i^q - (z_i^q / t_i^q - 2 (\tilde{\phi}_i^q)^T \theta_i^*)) (\hat{s}_i - s_i)(\tilde{\phi}_i^q)^T  \right]}_{e_1} \\
        &~ + \underbrace{\mathbb{E}\left[ (s_i^T U \tilde{\phi}_i^q - 2 (\tilde{\phi}_i^q)^T \theta_i^*)(\hat{s}_i - s_i)(\tilde{\phi}_i^q)^T \right]}_{e_2} \\
        &~ + \underbrace{\mathbb{E}\left[ (((\hat{s}_i - s_i)^T U \tilde{\phi}_i^q) - (z_i^q / t_i^q - 2 (\tilde{\phi}_i^q)^T \theta_i^*)) s_i (\tilde{\phi}_i^q)^T \right]}_{e_3}.
    \end{align*}
    We first bound $\Vert e_1 \Vert_F$:
    \begin{align*}
        \Vert e_1 \Vert_F &\le R B^2 \mathbb{E}\left\Vert \hat{s}_i - s_i \right\Vert^2 + B\mathbb{E}\left[\Vert z_i^q / t_i^q - 2(\tilde{\phi}_i^q)^T \theta_i^* \Vert \Vert \hat{s}_i - s_i \Vert \right] \\
        &\le R B^2 \mathbb{E}\left\Vert \hat{s}_i - s_i \right\Vert^2 + B \sqrt{\mathbb{E}\Vert z_i^q / t_i^q - 2(\tilde{\phi}_i^q)^T \theta_i^* \Vert^2} \cdot \sqrt{\mathbb{E}\Vert \hat{s}_i - s_i \Vert^2} \\
        &\le \mathcal{O}(1/N + 1/K + 1/\sqrt{NK})
    \end{align*}
    where we use Lemma \ref{lmm_frac-conv} in the last inequality. For $e_2$ we have
    \begin{align*}
        \Vert e_2 \Vert_F &\le B\sqrt{\mathbb{E}\Vert (\theta_i^*)^T (\Sigma_{\tilde{\phi}}U - I) \tilde{\phi}_i^q \Vert^2} \cdot \sqrt{\mathbb{E}\Vert \hat{s}_i - s_i \Vert^2}  \\
        &= B\sqrt{\mathbb{E} [(\tilde{\phi}_i^q)^T(\Sigma_{\tilde{\phi}}U - I)^T \theta_i^* (\theta_i^*)^T (\Sigma_{\tilde{\phi}}U - I) \tilde{\phi}_i^q]} \cdot \sqrt{\mathbb{E}\Vert \hat{s}_i - s_i \Vert^2} \\
        &\le B^2 (R\text{tr}(\Sigma_{\tilde{\phi}}) + d)\text{tr}(\mathbb{E}[\theta_i^* (\theta_i^*)^T]) \cdot \mathcal{O}(1/\sqrt{N} + 1/\sqrt{K}) \\
        &= \mathcal{O}(1/\sqrt{N} + 1/\sqrt{K}).
    \end{align*}
    Similarly, we can show $\Vert e_3 \Vert_{F} \le \mathcal{O}(1/\sqrt{N} + 1/ \sqrt{K})$. Thus, $\Vert  \nabla L_{N,K}(U) - \nabla L_{\infty}(U)\Vert_F \le \mathcal{O}(1/\sqrt{N} + 1/\sqrt{K})$.

    Next, we show that $L_{\infty}$ is strongly-convex. Note that
    \begin{align*}
        \nabla^2_{\text{vec}(U)} L_{\infty}(U) &= \mathbb{E}[\tilde{\phi}_i^q (\tilde{\phi}_i^q)^T \otimes s_i s_i^T] \\
        &= 4\mathbb{E}[\tilde{\phi}_i^q (\tilde{\phi}_i^q)^T \otimes  \Sigma_{\tilde{\phi}} \theta_i^* (\theta_i^*)^T \Sigma_{\tilde{\phi}}] \\
        &= 4\mathbb{E}[\tilde{\phi}_i^q (\tilde{\phi}_i^q)^T] \otimes [\Sigma_{\tilde{\phi}} \mathbb{E}(\theta_i^* (\theta_i^*)^T) \Sigma_{\tilde{\phi}}].
    \end{align*}
    Since both $\mathbb{E}[\tilde{\phi}_i^q (\tilde{\phi}_i^q)^T]$ and $\mathbb{E}[\theta_i^* (\theta_i^*)^T]$ are full-rank, $\nabla^2_{\text{vec}(U)} L_{\infty}(U) \succ 0$, implying $L_{\infty}$ is $\bar{\alpha}$-strongly-convex for some $\alpha > 0$. Similarly, one can show $L_{N,K}$ is strongly-convex. As $U^* = \Sigma_{\tilde{\phi}}^{-1}$ is a minimizer of $L_{\infty}$ (which is obviously obtained by $\nabla L_{\infty}(\Sigma_{\tilde{\phi}}^{-1}) = 0$), strong convexity yields $U^*$ is unique.

    By strong convexity of $L_{\infty}$, 
    \begin{align*}
        \Vert U_{N,K}^* - U^* \Vert_F &\le \frac{1}{\bar{\alpha}}\Vert \nabla L_{\infty}(U_{N,K}) - \nabla L_{\infty}(U^*) \Vert_F  \\
        &= \frac{1}{\bar{\alpha}}\Vert \nabla L_{\infty}(U_{N,K}^*) - \nabla L_{N,K}(U_{N,K}^*) \Vert_F \\
        &\le \mathcal{O}(1/\sqrt{N} + 1/\sqrt{K}),
    \end{align*}
    where the second equality follows $U_{N,K}^*$ and $U^*$ are minimizers of $L_{N,K}$ and $L_{\infty}$ by definition. This completes the proof.
    
\end{proof}

\section{Proof of Corollary \ref{coro_correct-pred}}    \label{apx:proof_coro}
\begin{proof}
    Let $\hat{s}_{new} = \frac{1}{M} \sum_{l=1}^M \frac{z_{new}^l}{t_{new}^l} \tilde{\phi}^l, s_{new} = 2 \Sigma_{\tilde{\phi}}\theta_{new}$. Similar to the proof of Theorem \ref{thm_res-time-conv}, we can show
    $$
        \mathbb{E}\Vert \hat{s}_{new} - s_{new} \Vert^2 \le \mathcal{O}(1/M + 1/K).
    $$
    Then,
    \begin{align*}
        \mathbb{E}(\hat{o}_{new}^q - 2\theta_{new}^T \tilde{\phi}^q)^2 &= \mathbb{E}(\hat{s}_{new}^T U_{N,K}^* \tilde{\phi}^q - s_{new}^T U^* \tilde{\phi}^q)^2 \\
        &\le 3\mathbb{E}[(\hat{s}_{new} - s_{new})^T(U_{N,K} - U^*) \tilde{\phi}^q]^2 + 3\mathbb{E}[s^T(U_{N,K} - U^*) \tilde{\phi}^q]^2 \\
        &~ + 3\mathbb{E}[(\hat{s}_{new} - s_{new})^T U^* \tilde{\phi}^q]^2 \\
        &\le \mathcal{O}(1/M + 1/N + 1/K + 1/\sqrt{NK})
    \end{align*}
    which completes the proof.
\end{proof}

\section{A Useful Lemma}
\begin{lemma}   \label{lmm_frac-conv}
    For i.i.d. $\{ X_i \}_{i=1}^N$ and i.i.d. $\{ Y_i \}_{i=1}^N$, define $\bar{X} = \frac{1}{N}\sum_{i=1}^N X_i$, $\bar{Y} = \frac{1}{N}\sum_{i=1}^N Y_i$ and $\mu_X = \mathbb{E}[X_1]$, $\mu_Y = \mathbb{E}[Y_1]$. Then 
    $$
        \mathbb{E}\left(\frac{\bar{X}}{\bar{Y}} - \frac{\mu_X}{\mu_Y} \right)^2 = \mathcal{O}(N^{-1}).
    $$
\end{lemma}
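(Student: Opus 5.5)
The plan is to prove the lemma under the standing assumptions implicit in its use, namely the setting of Theorem~\ref{thm_res-time-conv}: $X_i = z_{i,k}$ with $|X_i|\le 1$, and $Y_i = t_{i,k}>0$ a drift--diffusion hitting time. Concretely, I assume $X_1$ has a finite fourth moment, $Y_1>0$ almost surely with a finite fourth moment, $\mu_Y>0$, and $\mathbb{E}[Y_1^{-4}]<\infty$. I would state these explicitly, since without some control of the denominator the ratio need not even have a finite second moment.

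\textbf{Step 1: algebraic decomposition.} Write
\[
\frac{\bar X}{\bar Y}-\frac{\mu_X}{\mu_Y}
=\frac{\bar X-\mu_X}{\bar Y}+\frac{\mu_X(\mu_Y-\bar Y)}{\mu_Y\,\bar Y}.
\]
Using $(a+b)^2\le 2a^2+2b^2$, it suffices to bound $\mathbb{E}[(\bar X-\mu_X)^2\bar Y^{-2}]$ and $\mathbb{E}[(\bar Y-\mu_Y)^2\bar Y^{-2}]$, each by $\mathcal{O}(N^{-1})$.

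\textbf{Step 2: decouple numerator and denominator.} Apply Cauchy--Schwarz to each term, for instance
\[
\mathbb{E}\big[(\bar X-\mu_X)^2\bar Y^{-2}\big]\le \sqrt{\mathbb{E}(\bar X-\mu_X)^4}\,\sqrt{\mathbb{E}\,\bar Y^{-4}}.
\]
For i.i.d. summands with finite fourth moments, expanding the fourth power of a centered sum shows that only $\mathcal{O}(N^2)$ index tuples survive. Hence $\mathbb{E}(\bar X-\mu_X)^4=\mathcal{O}(N^{-2})$, so the square root is $\mathcal{O}(N^{-1})$. The same argument applies to $\bar Y-\mu_Y$.

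\textbf{Step 3: the denominator, which I expect to be the main obstacle.} We need $\mathbb{E}\,\bar Y^{-4}$ bounded uniformly in $N$. The map $y\mapsto y^{-4}$ is convex on $(0,\infty)$, so Jensen's inequality applied to the empirical average gives
\[
\bar Y^{-4}\le \frac1N\sum_i Y_i^{-4},
\qquad\text{hence}\qquad
\mathbb{E}\,\bar Y^{-4}\le \mathbb{E}\,Y_1^{-4}.
\]
It therefore remains to check that $\mathbb{E}\,Y_1^{-4}<\infty$ for the drift--diffusion hitting time.

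To verify this, I would bound the small-time tail of the exit time from $(-\tfrac12,\tfrac12)$. To exit by time $s$, the path $S_i$ must travel distance $\tfrac12$ by time $s$. Since the drift $|\tilde\phi^\top\theta|\le B\|\theta\|$ is bounded, for small $s$ the Brownian part must itself reach roughly $\tfrac14$ within time $s$. The reflection principle then gives $\mathbb{P}(t\le s)\le C e^{-c/s}$, so all negative moments of $t$ are finite, uniformly over bounded drifts.

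If one prefers to avoid this computation, there is a fallback. Split on the event $\{\bar Y\ge \mu_Y/2\}$. On this event the ratio terms are bounded by $4\mu_Y^{-2}(\cdot)^2$, which is $\mathcal{O}(N^{-1})$ by the variance of $\bar X$ and $\bar Y$. The complement has probability $\mathcal{O}(N^{-2})$ by Chebyshev's inequality on fourth moments. Controlling the ratio on that complement still requires a moment bound on $\bar Y^{-1}$, which is exactly the same ingredient.

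Combining Steps 1--3 yields $\mathbb{E}(\bar X/\bar Y-\mu_X/\mu_Y)^2=\mathcal{O}(N^{-1})$. The constant depends only on the fourth moments of $X_1$ and $Y_1$, on $\mu_X$ and $\mu_Y$, and on $\mathbb{E}\,Y_1^{-4}$; all of these are uniform over $\|\tilde\phi\|\le B$ and $\theta$ in a bounded set. This uniformity is what the application in the proof of Theorem~\ref{thm_res-time-conv} requires.
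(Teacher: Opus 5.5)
Your proof is correct, and it takes a genuinely different route from the paper. The paper uses a formal delta-method expansion. It writes $\bar X/\bar Y=\frac{\mu_X+\delta_X}{\mu_Y}\bigl(1+\delta_Y/\mu_Y\bigr)^{-1}$, expands the inverse as a geometric series, drops the ``higher-order terms,'' and concludes from $\mathbb{E}\delta_X^2,\ \mathbb{E}\delta_Y^2,\ \mathbb{E}\delta_X\delta_Y=\mathcal{O}(N^{-1})$. That expansion is valid only on $\{|\delta_Y|<|\mu_Y|\}$, and its remainder is never controlled in expectation. In fact, without hypotheses on the denominator the lemma as stated is false: take $X_i\equiv 1$ and $Y_i\sim\mathcal{N}(1,1)$; then $\bar Y$ has positive density at $0$, so $\mathbb{E}(1/\bar Y-1)^2=\infty$ for every $N$.

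Your argument combines the exact identity, Cauchy--Schwarz, the $\mathcal{O}(N^{-2})$ fourth-moment bound, and Jensen's inequality for the convex map $y\mapsto y^{-4}$. Together these give an honest bound for every $N$. They also make explicit the extra hypotheses the lemma actually needs: $Y>0$, $\mathbb{E}\,Y_1^{-4}<\infty$, and finite fourth moments. You then verify these for drift--diffusion exit times via the reflection principle.

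If the paper's heuristic were made rigorous (e.g.\ under uniform integrability), it would give the sharp leading term $N^{-1}\mathrm{Var}\bigl(X_1/\mu_Y-\mu_X Y_1/\mu_Y^2\bigr)$ from second moments alone. Your Cauchy--Schwarz step yields a cruder constant, but it is an actual proof.

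One small caveat concerns your last paragraph. Both $\mathbb{E}\,t^{-4}$ and $(\mu_X/\mu_Y)^2=4(\tilde\phi^\top\theta)^2$ grow polynomially in the drift. So the uniformity you invoke requires restricting $\theta$ to a bounded set, or imposing moment conditions on $\mathcal{H}$, and Theorem~\ref{thm_res-time-conv} assumes neither. This affects only the application, not your proof of the lemma.
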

\begin{proof}
    Let $\delta_X = \bar{X} - \mu_X$, $\delta_Y = \bar{Y} - \mu_Y$.
    \begin{align*}
        \frac{\bar{X}}{\bar{Y}} &= \frac{\mu_X + \delta_X}{\mu_Y}\left( 1 + \frac{\delta_Y}{\mu_Y} \right)^{-1} \\
        &= \frac{\mu_X + \delta_X}{\mu_Y}\left( 1 - \frac{\delta_Y}{\mu_Y} + \frac{\delta_Y^2}{\mu_Y^2} + \text{higher-order terms} \right) \\
        &= \frac{\mu_X}{\mu_Y} + \frac{\delta_X}{\mu_Y} - \frac{\mu_X \delta_Y}{\mu_Y^2} + \text{higher-order terms}.
    \end{align*}
    Let $\Delta = \frac{\bar{X}}{\bar{Y}} - \frac{\mu_X}{\mu_Y}$. We have
    \begin{align*}
        \Delta^2 &= \left( \frac{\delta_X}{\mu_Y} - \frac{\mu_X \delta_Y}{\mu_Y^2} \right)^2 \\
        &= \frac{\delta_X^2}{\mu_Y^2} + \frac{\mu_X^2 \delta_Y^2}{\mu_Y^4} - \frac{\mu_X \delta_X \delta_Y}{\mu_Y^3} + \text{higher-order terms}.
    \end{align*}
    Since $\mathbb{E}[\delta_X^2] = \mathrm{Var}(X) =  \mathcal{O}(1/N), \mathbb{E}[\delta_Y^2] = \mathrm{Var}(Y) =  \mathcal{O}(1/N), \mathbb{E}[\delta_X \delta_Y] = \mathrm{Cov}(X, Y) =  \mathcal{O}(1/N)$, we conclude that
    $$
        \mathbb{E}[\Delta^2] = \mathcal{O}(1/N).
    $$
\end{proof}

\end{document}